%%%%%%%% ICML 2022 EXAMPLE LATEX SUBMISSION FILE %%%%%%%%%%%%%%%%%

\documentclass[nohyperref]{article}

% Recommended, but optional, packages for figures and better typesetting:
\usepackage{thmtools, thm-restate}
\usepackage{microtype}
\usepackage{graphicx}
\usepackage{booktabs} % for professional tables
\usepackage{amsmath}
\DeclareMathOperator*{\argmin}{arg\,min}

\usepackage{hyperref}
\usepackage{algorithm}
\usepackage{algpseudocode}
\usepackage{bbm}
\usepackage{subcaption}
\usepackage{caption}
\usepackage{wrapfig}
% \algdef{SE}[SUBALG]{Indent}{EndIndent}{}{\algorithmicend\ }%
\algtext*{Indent}
\algtext*{EndIndent}

\newcommand{\ind}{\perp\!\!\!\!\perp} 
\DeclareMathOperator\supp{supp}
\usepackage[multiple]{footmisc}

% hyperref makes hyperlinks in the resulting PDF.
% If your build breaks (sometimes temporarily if a hyperlink spans a page)
% please comment out the following usepackage line and replace
% \usepackage{icml2022} with \usepackage[nohyperref]{icml2022} above.
% \usepackage{hyperref}
% \usepackage[nohyperref]{icml2022}

% Attempt to make hyperref and algorithmic work together better:
% \newcommand{\theHalgorithm}{\arabic{algorithm}}

% Use the following line for the initial blind version submitted for review:

% If accepted, instead use the following line for the camera-ready submission:
\usepackage[accepted]{icml2022}

% For theorems and such
\usepackage{amsmath}
\usepackage{amssymb}
\usepackage{mathtools}
\usepackage{amsthm}

% if you use cleveref..
\usepackage[capitalize,noabbrev]{cleveref}

%%%%%%%%%%%%%%%%%%%%%%%%%%%%%%%%
% THEOREMS
%%%%%%%%%%%%%%%%%%%%%%%%%%%%%%%%
\theoremstyle{plain}

\theoremstyle{definition}
\newtheorem{definition}{Definition}

\theoremstyle{remark}

% Todonotes is useful during development; simply uncomment the next line
%    and comment out the line below the next line to turn off comments
%\usepackage[disable,textsize=tiny]{todonotes}
\usepackage[textsize=tiny]{todonotes}

% The \icmltitle you define below is probably too long as a header.
% Therefore, a short form for the running title is supplied here:
\icmltitlerunning{Influence-Augmented Local Simulators}

\begin{document}

\twocolumn[
\icmltitle{Influence-Augmented Local Simulators: \\ A Scalable Solution for Fast Deep RL in Large Networked Systems}

% It is OKAY to include author information, even for blind
% submissions: the style file will automatically remove it for you
% unless you've provided the [accepted] option to the icml2022
% package.

% List of affiliations: The first argument should be a (short)
% identifier you will use later to specify author affiliations
% Academic affiliations should list Department, University, City, Region, Country
% Industry affiliations should list Company, City, Region, Country

% You can specify symbols, otherwise they are numbered in order.
% Ideally, you should not use this facility. Affiliations will be numbered
% in order of appearance and this is the preferred way.
\icmlsetsymbol{equal}{*}

\begin{icmlauthorlist}
\icmlauthor{Miguel Suau}{delft}
\icmlauthor{Jinke He}{delft}
\icmlauthor{Matthijs T. J. Spaan}{delft}
\icmlauthor{Frans A. Oliehoek}{delft}
%\icmlauthor{}{sch}
%\icmlauthor{}{sch}
\end{icmlauthorlist}

\icmlaffiliation{delft}{Delft University of Technology}

\icmlcorrespondingauthor{Miguel Suau}{m.suadecastro@tudelft.nl}
% You may provide any keywords that you
% find helpful for describing your paper; these are used to populate
% the "keywords" metadata in the PDF but will not be shown in the document
\icmlkeywords{Simulation, Influence, Deep Reinforcement Learning.}

\vskip 0.3in
]

% this must go after the closing bracket ] following \twocolumn[ ...

% This command actually creates the footnote in the first column
% listing the affiliations and the copyright notice.
% The command takes one argument, which is text to display at the start of the footnote.
% The \icmlEqualContribution command is standard text for equal contribution.
% Remove it (just {}) if you do not need this facility.

\printAffiliationsAndNotice{}  % leave blank if no need to mention equal contribution
% \printAffiliationsAndNotice{\icmlEqualContribution} % otherwise use the standard text.

\begin{abstract}
Learning effective policies for real-world problems is still an open challenge for the field of reinforcement learning (RL). The main limitation being the amount of data needed and the pace at which that data can be obtained. In this paper, we study how to build lightweight simulators of complicated systems that can run sufficiently fast for deep RL to be applicable. We focus on domains where agents interact with a reduced portion of a larger environment while still being affected by the global dynamics. Our method combines the use of local simulators with learned models that mimic the influence of the global system. The experiments reveal that incorporating this idea into the deep RL workflow can considerably accelerate the training process and presents several opportunities for the future.
\end{abstract}

\section{Introduction}
% Following the current norm, this paper should start by listing all the astonishing, and yet already familiar to all of us, examples of Deep Reinforcement Learning (RL) doing its thing. Instead, we will use the space to discuss one of its limitations: sample complexity. And, since we will be addressing what is also a very popular problem, we should describe and cite all the many papers that in one way or another have tried to alleviate this issue. Nonetheless, and although we wish we had a solution that can help our neural networks learn using fewer samples, we will be presenting an approach that, rather than directly solving the problem, tries to get around it. That being said and because we do not want to be regarded by our peers as inconsiderate or ungrateful, here are the obligatory references \citep{}. 
% THE REMARKABLE SUCCESS
The remarkable success of Deep Reinforcement leaning (RL) on paper is in sharp contrast with its narrow applicability to real-world problems. Among many other reasons, the most important factor preventing the practical deployment of this framework is perhaps its high sample complexity \citep{botvinick2019reinforcement}. This is a very well-known issue and there is a long list of previous works that in one way or another have tried to alleviate it \citep{kakade2003sample, Mnih15Nature,NEURIPS2018_2de5d166}. Nonetheless, the framework is still far from being useful in practice. Here,
% and although we wish we had a solution that can help neural networks learn using very few samples, in this paper, 
rather than proposing yet another method that tries to solve the problem directly, we present a more pragmatic approach to get around it. Our solution is based on the observation that Deep RL's best results have been obtained in domains like video games \citep{bellemare13arcade, vinyals2019grandmaster} or simulated environments \citep{openaigym,Ganesh2019ReinforcementLF, bellemare2020autonomous} where data collection is  extremely fast. Unfortunately, real-world problems are typically more complex and simulators, if available, are usually very slow \citep{dulac2019challenges}. 

In this work, we design lightweight versions of large simulators with the goal of speeding up the overall training process. The method we propose applies to domains where agents only interact with a reduced local part of a larger environment, yet they are indirectly being affected by the global dynamics. Traffic control is one example of such environments. Say, for instance, that we wanted to train an agent to control the traffic lights of a particular intersection in a very large city. To do so we could build a small local simulator that captures only the information that is directly relevant to the agent (traffic density in the neighborhood; \citealt{VanDerPol16NIPSWS}). However, after training, we may find out that an agent that does very well in the small simulator, performs poorly in the real intersection. The performance gap would be caused by a data distribution shift \citep{quionero2009dataset, arjovsky2021out}. Even though the simulator might be able to closely mimic the local dynamics (i.e. cars moving within the intersection), it would fail to account for the interactions of the local neighborhood with the rest of the city. Thus, the agent learns a policy based on certain transition dynamics that turn out to be very different in the real world. 
% Here we study the underlying structure of the state space and build compact simulators that can accurately capture the local transition dynamics while still being able to account for the fact that the agent's local problem is embedded in a larger system.
Alternatively, we could try to model the dynamics of a sufficiently large portion of the city, but this would surely result in a very slow simulator.

One important property of the traffic domain is that, although the agent's local problem may be \emph{affected} by many external variables (traffic densities in other parts of the city), it is only \emph{directly influenced} by the road segments that connect the intersection with the rest of the city. Hence, we can simply monitor the traffic densities at these road segments since, from the agent's local perspective, they summarize the effect of all the external variables. This insight is not specific to the traffic domain. It is in fact common in networked systems (e.g. warehouse commissioning, \citealt{claes2017decentralised}; electrical power grids, \citealt{wang2021multi}; heating systems, \citealt{gupta2021energy}; telecommunication networks, \citealt{suau2021offline}) that interactions between different components occur through a limited number of variables. 

Supported by the formal framework of \emph{influence-based abstraction} (IBA) \citep{oliehoek2021sufficient}, we exploit the above property to build local simulators that mirror the response of the global system through the so called \emph{influence predictor}. 

\paragraph{Contributions} The main contribution of this paper is the integration of the IBA framework with the Deep RL workflow.
% \color{blue}{FINISH THIS}
% \color{black} EXPLAIN WHAT THIS IS
Our experiments reveal, that the combination of  local simulators and influence predictors can considerably accelerate the reinforcement learning process.
% We also treat the problem of partial observability and try to bound the memory needs of the influence predictor. 
We also demonstrate both theoretically and empirically that the memory needs of the influence predictor are fully determined by the agent's memory capacity. Moreover, we study the impact that distribution shifts caused by changes in the agent's policy may have on the influence predictor, and explore how to prevent the model from picking up on spurious correlations that are not invariant across policies. Finally, we investigate the effect of transfer learning and show how inaccurate simulators might also be able to render effective policies.
% The rest of the paper is organized as follows. In Section \ref{sec:relatedwork} we review the related work on reducing Deep RL's sample complexity, state factorization and IBA. Section \ref{sec:background} introduces the mathematical notation and describes the IBA framework. In Section \ref{sec:ILS}, we present our method, show how to model the influence of the global system using neural networks, and point out two important considerations when using these models for training RL agents: policy and history dependence. Moreover, we also give an informal view on why inaccurate influence predictors may be sometimes sufficient for near-optimal behavior. The method is empirically evaluated on two benchmark domains in Section \ref{sec:experiments}. Finally, Section \ref{sec:conclusion} presents our conclusions and discusses future work.
% Finally, and although the main purpose of the influence predictor is to replicate the effect of the external variables on the local problem, we show that it can also be used to provide implicit memories and thus relieve the agent from the need to remember past observations.
\section{Related Work}\label{sec:relatedwork}
The problem of sample complexity has been extensively studied by the RL community. Among many others, the most promising solutions are: replaying previous experiences to make more efficient use of the available data \citep{Mnih15Nature, DBLP:journals/corr/SchaulQAS15,NEURIPS2019_1b742ae2}, or learning surrogate models of the environment dynamics \citep{Sutton90integratedarchitectures,NEURIPS2018_2de5d166, schrittwieser2020mastering, moerland2020model}. Yet, these techniques are only effective when provided enough real samples. If not, replay buffers might not be sufficient to obtain good policies and surrogate models might generalize poorly. An alternative is to train agents with synthetic data coming from a simulator. However, most real-world scenarios are excessively complex and simulators, if available, are computationally expensive \citep{dulac2019challenges}. Here we argue that building a simulator of the entire system is often unnecessary. In fact, as we explain in the following sections, in many situations we can get away by just modelling the dynamics around the agent's local neighborhood.

A few prior works have investigated the computational benefits of factorizing large systems into independent local regions \citep{Nair05AAAI,  Varakantham07AAMAS, Kumar11IJCAI, Witwicki11AAMAS}. Unfortunately,  since local regions are often coupled to one another, such factorizations are not always appropriate. Nonetheless, in many cases, the interactions between regions occur through a limited number of variables. Using this property, the theoretical work by \citet{oliehoek2021sufficient} on influence-based abstraction (IBA) describes how to
% these variables summarize the influence of the system on the local region. As such, even when the regions are coupled,
build influence-augmented local simulators (IALS) of local-POMDPs, which model only the variables in the environment that are directly relevant to the agent while monitoring the response of the rest of the system with the influence prediction.  The problem is, the exact computation of the conditional influence distribution is intractable and we can only try to estimate it from data. \citet{Congeduti21AAMAS} provide theoretical bounds on the value loss when planning with approximate influence predictors. The work by \citet{NEURIPS2020_2e6d9c60} has empirically demonstrated the advantage of this approach to improve the efficiency of online planning in two discrete toy problems. 
% They show that, when the time budget is limited, replacing the global simulator with the IALS yields better planning performance since the former is computationally much lighter, and thus allows running more simulations in the same amount of time. 

Here, we extend the method to high dimensional problems and study how to integrate the IBA framework with Deep RL. This has profound implications that do not arise in the planning context, namely the relation between the agent's memory capacity and the history dependence of the influence predictor (Section \ref{sec:history_dependence}), and the problem of off-policy generalization (Section \ref{sec:policy_dependence}). Moreover, while \citeauthor{NEURIPS2020_2e6d9c60} showed that the IALS outperforms the global simulator only when the time budget is limited, our results reveal that the IALS can train policies in a fraction of the time and that these can match the same performance as policies trained on the GS, without imposing any time constraints, and despite the IALS is only approximate. 

\section{Preliminaries}\label{sec:background}
Here we introduce the notation used throughout the paper, provide the mathematical definition of the problem, and describe the Influence-based abstraction (IBA) framework \citep{oliehoek2021sufficient}, which gives theoretical support to the method we introduce in Section \ref{sec:ILS}. 
% In order not to obfuscate the paper with too many details, the following is a, perhaps oversimplified, definition of IBA, which we believe should be sufficient to follow the present paper. For a more detailed mathematical definition, the interested reader is referred to. 
\begin{figure*}[ht]
% \begin{subfigure}[b]{0.5\columnwidth}
% \centering
% \includegraphics[width=0.5\columnwidth,keepaspectratio,valign=m]{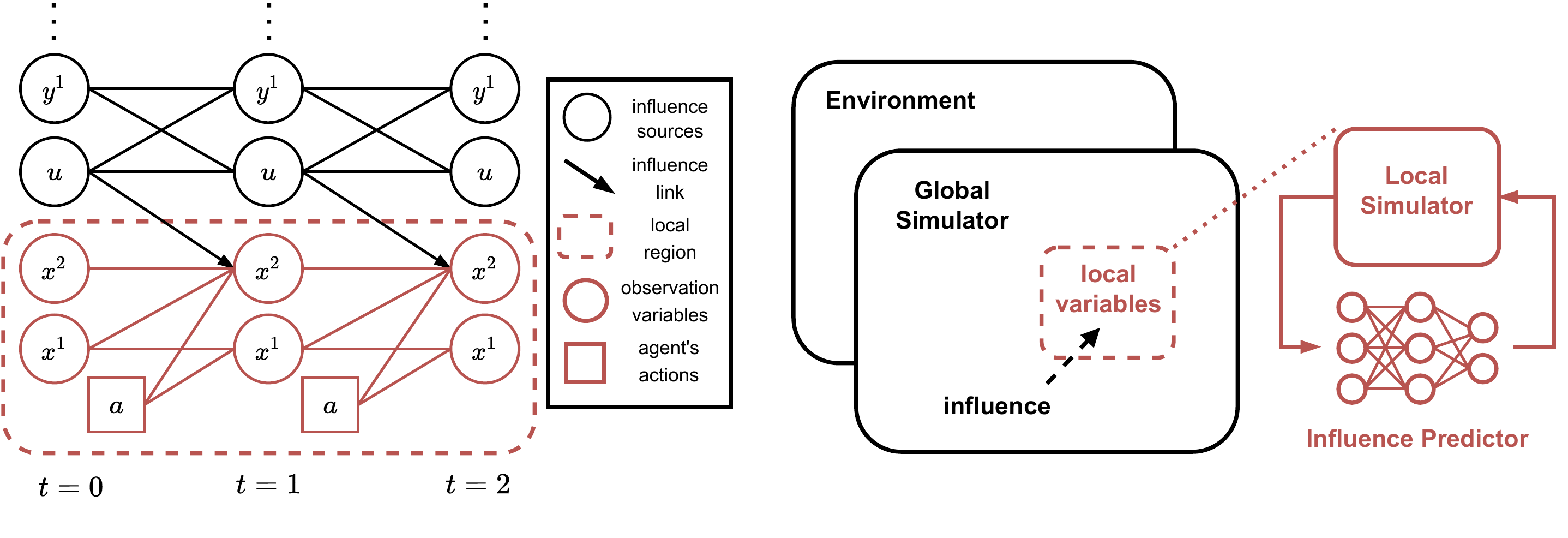}
% \end{subfigure}
% \begin{subfigure}[b]{0.5\columnwidth}
% \vskip -1in
% \centering
% \includegraphics[width=0.5\columnwidth,keepaspectratio,valign=m]{}
\hspace{12mm}
\includegraphics[width=0.85\textwidth]{figures/wc-pomdp.pdf}
% \end{subfigure}
\vskip -0.2in
\caption{Left: A Dynamic Bayesian Network of a Local-POMDP unrolled 3 timesteps. Right: A diagram of the IALS.}
\label{fig:pomdp+ips}
% \vskip -0.2in
\end{figure*}

\subsection{Problem definition}
% As explained in the introduction, our method targets domains where agents only interact with a small portion of a large environment. That is, although the agent is affected by the global dynamics, it can only observe its local neighborhood. Moreover, only local variables affect the agent's rewards. As such, the problem can be modelled as a partially observable Markov decision process (POMDP) \citep{Kaelbling96JAIR}.
As explained in the introduction, we target domains where, although the agent is affected by the global dynamics, it can only observe its local neighborhood. These can be modelled as partially observable Markov decision processes (POMDP) \citep{Kaelbling96JAIR}.

% $o \in \Omega \subset S$. Yet, the agent's decision making problem is embedded in a larger system and thus affected by the global dynamics. Unfortunately, because $S$ is high dimensional, modelling the global transitions dynamics is computationally very demanding. 
\begin{definition}[POMDP]
A partially observable Markov decision process (POMDP) is a tuple $\langle S,A,T,R,\Omega,O\rangle$ where $S$ is the state space, $A$ is the set of actions, $T$ is the transition probability function, with $T(s_{t+1}|s_t,a_t)$, $R(s_t,a_t)$ defines the immediate reward, $\Omega$ is the observation space and $O$ is the observation probability function, $O(o_{t+1}|s_{t+1})$.
\end{definition}

As usual, the task consists in finding the policy $\pi$ that maximizes the expected discounted sum of rewards \citep{SuttonBarto98}. Since the agent receives only a local observation $o$ of the true state $s$, a policy that is based only on the most recent information can be sub-optimal. In general, the agent is required to keep track of its past actions and observations to make the right action choices. Policies are therefore mappings from the past action-observation history (AOH), $h_t = \langle o_1, a_1 ..., a_{t-1}, o_t\rangle$, to actions, $\pi(a_t|h_t)$.  

We consider, in particular, a special case of a POMDP hereinafter referred to as a \emph{Local-POMDP}.
% that our setting is different from the popular case in which $o_t$ is just some noisy version of $s_t$. 
\begin{definition}[Local-POMDP]
\label{def:local-pomdp}
A Local-POMDP is a POMDP where $O$ and $R$ depend only on a subset of state variables $x \in X \subseteq S$, which define the agent's local region. Such that $O(o_{t+1}|s_{t+1}) = \dot{O}(o_{t+1}|x_{t+1})$ and $R(s_t, a_t) = \dot{R}(x_t, a_t)$, where $\dot{O}$ and $\dot{R}$ are the local observation and reward functions.
\end{definition}

% As a result, although during training we are computing the exact state transitions $T(s_{t+1}|s_t,a_t)$, the agent is only capable of perceiving the local transitions as an expectation over the true states given the previous AOH,
% Hence, the Bellman update rule for the action value function $Q^\pi$ when following policy $\pi$ is given by:
% \begin{equation}
% \small
%   Q^\pi(h_t, a_t) = R(h_t,a_t) + \sum_{o_{t+1}} P(o_{t+1}|h_t,a_t) \pi(a_{t+1}|h_{t+1}) Q^\pi(h_{t+1},a_{t+1}),
%   \label{eq:POMDP}
% \end{equation}
% with 
% \begin{equation}
% \small
%     R(h_t, a_t) = \sum_{s_t}R(s_t,a_t)P(s_t|h_t)
% \end{equation}
% and
% \begin{equation}
% \small
%     P(o_{t+1}|h_t,a_t) = \sum_{s_t, s_{t+1}}O(o_{t+1}|s_{t+1}, a_t)T(s_{t+1}|s_t, a_t) P(s_t|h_t).
%     \label{eq:global-POMDP}
% \end{equation}
% As such, one can argue that
% from \eqref{eq:POMDP} and \eqref{eq:global-POMDP} we see that, even though the agent can only observe its local neighborhood, we still need to 
% simulating the transitions $T(s_{t+1}|s_t, a_t)$ of the full set of state variables
% update the value function (equations \ref{eq:POMDP} to \ref{eq:global-POMDP})
% is unnecessary, and while doing so is possible in small problems, it might become computationally intractable in large domains.

% that our setting is different from the popular case in which $o_t$ is just some noisy version of $s_t$. 

% from \eqref{eq:POMDP} and \eqref{eq:global-POMDP} we see that, even though the agent can only observe its local neighborhood, we still need to 
Looking at the above definition, one can argue that simulating the transitions $T(s_{t+1}|s_t, a_t)$ of the full set of state variables
% update the value function (equations \ref{eq:POMDP} to \ref{eq:global-POMDP})
is unnecessary, and while doing so is possible in small problems, it might become computationally intractable in large domains. We can instead define a new transition function $\bar{T}$ that models only the local state variables $x$, $\bar{T}(x_{t+1}|x_t, a_t)$.
% takes the full state $s_t$ and returns a subset of variables $o_t$ that correspond to the agent's local region, such that $o_t \in \Omega \subseteq S$\footnote{The original IBA formulation, does in fact contemplate allows}. Also, the reward function only depends on the local observations, $R(s_t, a_t) = R(o_t, a_t)$. 
% \begin{definition}[Order of a POMDP] 
% A POMDP is said to be of order $n$ if the optimal policy that can be obtained using the last $n$ action-observation pairs is equivalent to that of an infinite action observation history.
% \end{definition}
% \subsection{Local-form POMDP}
% As such, an alternative would be to model only the transitions of the state variables that the agent cares about.
% \begin{definition}[Local POMDP]
% A Local POMDP is a POMDP for which, $O(o_{t+1}|s_{t+1}) = O(o_{t+1}|x_{t+1})$ and $R(s_t, a_t) = R(x_t, a_t)$ with $x \in X \subseteq S$ such that the transitions function $\bar{T}(x_{t+1}|s_t,a_t)$ only models the local state variables. 
% \end{definition}
The problem is that $x_{t+1}$ may still depend on the rest of state variables $s_t$ and thus $\bar{T}$ is not well defined.
% i.e. $ P(x_{t+1}|s_t,a_t) \neq \bar{T}(x_{t+1}|x_t,a_t)$. 
Hence, we are forced to keep 
track of the action-local-state history (ALSH) $l_t = \langle x_1, a_1 ..., a_{t-1}, x_t\rangle$
% or maintain a belief over the global states, $P(s_t|h_t)$,
% The problem is, we cannot sample directly from $\bar{T}(x_{t+1}|s_t,a_t)$ at every step because we are no longer computing the transitions of all state variables. That is, 
such that we can sample the next local state as
\begin{equation}
\small
\begin{split}
    P(x_{t+1}|h_t, a_t) = \sum_{s_t}T(x_{t+1}|s_t, a_t) P(s_t|l_t).
    \label{eq:local_transitions}
\end{split}
\end{equation}
% With the Local POMDP formulation, the problem is only partially solved. 
% Despite the agent can only observe its local neighborhood, the local transition dynamics are still affected by the rest of state variables. 
% As shown in \eqref{eq:local_transitions}, we do not need to simulate the global state transitions, yet we have to compute the belief $b(s_t|h_t)$ such that we can marginalize over $s_t$. 
\subsection{Influence-based Abstraction}
% Broadly speaking, the goal of IBA, is to form compact representations of POMDPs by abstracting away all those variables that do not \emph{directly influence} the agent's local transition and reward functions. 
With the above formulation, the problem is only partially solved.
% Despite the agent can only observe its local neighborhood, the local transition dynamics are still affected by the rest of state variables. 
We no longer need to simulate the global state transitions, yet we have to maintain a distribution over the full state given the ALSH, $P(s_t|l_t)$, and then calculate the local transitions with $P(x_{t+1}|s_t, a_t)$. As explained in the introduction, in many local POMDP problems, however, only a fraction of the state variables will \emph{directly influence} the local region.
% Starting from the original POMDP model, we show how the amount computations needed to calculate the agent's local transitions can be greatly reduced by exploiting the structure of the problem. 
% However, this implies that we need to compute the transitions for all variables in $S$ even though the agent only perceives the local observation $o$. Instead, we could try to extract from $T$ the agent's local transition dynamics and compute only $P(o_{t+1}|o_t)$. Unfortunately, this probability is not well-defined since $o_{t+1}$ might also depend on other variables in $S$.

The diagram in Figure \ref{fig:pomdp+ips} is a Dynamic Bayesian Network (DBN) \citep{pearl88, boutilier1999decision} of a local POMDP prototype. The agent's local region, 
% which we assume to be fully observable
% \footnote{EXPLAIN THAT THIS IS DIFFERENT IN THE IBA FORMULATION. THE LOCAL REGION MIGHT NOT BE FULLY OBSERVABLE. THIS PAPER ASSUMES IT IS.}, 
corresponds to the variables, denoted by $x \in X$, that lie within the red box, $x = \langle x^1,x^2 \rangle$. The diagram also shows the non-local variables, known as influence sources $u \in U \subseteq S \setminus X$, that influence the local region directly. The three dots on the top indicate that there can be, potentially many, other non-local variables in $S$. These are denoted by $y$ and, as shown in the diagram, can only influence $x$ via $u$. As such, given $u_t$, $x_{t+1}$ is conditionally independent of $y_t$, $P(x_{t+1}|x_t,u_t,y_t) = P(x_{t+1}|x_t, u_t)$. 
\begin{definition}[IALM] An influence-augmented local Model (IALM) is a POMDP with local states $x_t \in X$, influence sources $u_t \in U$, local observation function $\dot{O}(o_{t+1}|x_{t+1})$, local reward function $\dot{R}(x_t, a_t)$, local transition function $\dot{T}(x_{t+1}|x_t, u_t, a_t)$ and an influence distribution $I(u_t|l_t)$.
% to monitor the effect of the global dynamics on the local problem.
\end{definition}
% Without loss of generality and in order to simplify the notation, from now on we will assume that local states are fully observable such that  function $f$ tha

Using the IALM we can simulate the local transitions as
\begin{equation}
\small
\begin{split}
    P(x_{t+1}| l_t, a_t) = 
    \sum_{u_t}  \dot{T}(x_{t+1}|x_t, u_t, a_t)I(u_t|l_t).
    \label{eq:IALM}
\end{split}
\end{equation}
Note that, as opposed to the Local-POMDP, the transition function $\dot{T}$ in the IALM is defined purely in terms of the local state variables and the influence sources. Moreover, since $u_t$ \emph{d-separates} \citep{Bishop06book} $x_t$ from $y_t$, the influence distribution is just the conditional probability over $u_t$ instead of the full set of of state variables $s_t$. All in all, this translates into a much more compact, yet \emph{exact} representation of the problem \citep{oliehoek2021sufficient}, which should be computationally much lighter than the original POMDP.

% IBA defines an influence-augmented local POMDP model, which only reproduces the transition dynamics of the agent's local region,
% \begin{equation}
% \small
%     \dot{T}(x_{t+1}|l_t) = \sum_{u_t}P(x_{t+1}|x_t, u_t, a_t)I(u_t|l_t, a_t)
% % \end{split}
% \end{equation}
% where $\bar{T}$ is the new transitions function and $I$ is the influence predictor.
% \begin{equation}
%     I(l_t, a_t, i_t) = P(i_t|l_t, a_t) = Pod_{m=0}^M P(i^m_t|l_t, a_t).
% \end{equation}

% Hence,  assuming that the history dependencies of $u_t$ are sufficiently small\footnote{BRIEF REFERENCE TO D-SETS}, the tuple $\langle \Omega, I_s, A, \bar{T}, R \rangle$ constitutes a much more compact, yet exact, representation of the agent's decision making problem. 

\section{Influence-Augmented Local Simulators for Deep RL}\label{sec:ILS}
In the following, we describe how we make use of the IALM formulation to design lightweight simulators that can speed up the long training times imposed by
% data-hungry models like 
 neural network policies. Figure \ref{fig:pomdp+ips} (right) shows a diagram of the influence-augmented local simulator (IALS), which is composed of a \emph{local simulator} and an \emph{approximate influence predictor}.

% conditional probability distribution of the influence sources given the local observations. The combination of these two constitutes the Influence-augmented partial simulator ($I_{PS}$).

\paragraph{Local simulator (LS):}
The LS is an abstracted version of the environment that only models a small portion of it. As opposed to a global simulator (GS), which should closely reproduce the dynamics of every state variable, the LS focuses on characterizing the transitions of those variables $x_t$ that the agent directly interacts with, $\dot{T}(x_{t+1}|x_t, u_t, a_t)$. Recall that, in our setting, both the reward $R$ and  observation $O$ functions are defined in terms of $x_t$ and $a_t$.
% Note that $\dot{T} $ can be easily extracted from $T$.

\paragraph{Approximate influence predictor (AIP): } The AIP monitors the interactions between the local region and the external variables $y_t$ by estimating $I(u_t|l_t)$. Ideally, we would like the approximation to match the true influence distribution. However, 
% The local variables might strongly depend on the rest of state variables. This, however, does not mean that the transition model should be able to reproduce the exact global dynamics. As explained above, the effect of the external variables on the local transitions is summarized by the influence sources. 
% After abstracting away a big chunk of the state space we now need to model the interactions between the local variables and the rest of the environment, this however does not mean that the model should be able to reproduce the exact global dynamics.
due to combinatorial explosion, computing the exact probability $I(u_t|l_t)$ is generally intractable \cite{oliehoek2021sufficient}, and so we can only try to approximate it using, for instance, a parametric function. We write  $\hat{I}_\theta$ to denote the AIP,
where $\theta$ are the parameters, which need to be learned from data. Replacing the true influence distribution with an approximation implies that we are no longer guaranteed to find the optimal policy \cite{Congeduti21AAMAS}. Nonetheless, as we show in our experiments, it is often worth trading accuracy for computational efficiency.
% Once, we have trained a well calibrated influence predictor we need to combine it with the lightweight $P_S$. Then, assuming that the resulting $I_{PS}$ is computationally more efficient, we can just use it to replace the $G_S$ and train Deep RL agents much faster. Below, we describe two important consequences of combining the $I_{PS}$ with Deep RL. 
% \subsection{Learning Approximate influence predictors}
We model $\hat{I}_\theta$ using a neural network, which we train on a dataset of $N$ samples of the form $(l_n, u_n)$ collected from the GS (see Algorithm \ref{alg:collect} in Appendix \ref{ap:algorithms}).
% \input{IALS_dataset}
% Algorithm \ref{alg:collect} shows the pseudocode for how $D$ can be collected from the $G_S$ while following some initial (random) policy $\pi_0$. In order to collect $N$ samples we need to run $N/T$ episodes of length $T$. 
% \input{collectD}
Since role of the AIP is to estimate the conditional probability of the influence sources $u_t$ given the past ALSH,
we can formulate the task as a classification problem. The neural network is optimized using the expected cross-entropy loss, 
\begin{equation}
\small
     L(\hat{I}_\theta) = - \frac{1}{N}\sum_{n=0}^N u_n \log \hat{I}_\theta(u_n|l_n),
\label{eq:loss}
\end{equation}
which minimizes the KL divergence between the true probabilities $I(u_t|l_t)$ and those predicted by $\hat{I}_\theta(u_t|l_t)$\footnote{Please refer to Appendix \ref{ap:imp_details} for more details on the implementation of the AIP.}.
Once the network has been trained, we can simulate trajectories with the IALS (Algorithm \ref{alg:sample} in Appendix \ref{ap:algorithms}). These trajectories can then be used to train policies with any standard Deep RL method \citep{Mnih15Nature, schulman2017proximal}.

In the next two sections we discuss two important considerations when training AIPs for the RL setting. We first show that, since the agent's memory is inevitably finite, we normally will not need to train complicated AIPs that condition on the full ALSH (Section \ref{sec:history_dependence}). In Section \ref{sec:policy_dependence},
we study the impact that distribution shifts caused by changes in the agent's policy may have on the AIP, and explore how to prevent the AIP from picking up on spurious correlations that are not invariant across policies.

\subsection{Finite memory agents and AIP history dependence }\label{sec:history_dependence}

% As we show in our last experiment, this has important implications since, due to combinatorial explosion, obtaining good estimates of the influence sources when conditioning on the full history is.
As mentioned before, we can only rely on approximations of $I(u_t|l_t)$ since computing the exact distribution that conditions on the full ALSH is intractable. Unfortunately though, even with the most sophisticated RNNs \cite{hochreiter1997long, Cho2014Learning}, learning long term temporal dependencies is also very difficult. However, it is worth noting that, if capturing long term dependencies is hard for the AIP, it is too for the agent. In fact, it is very common in Deep RL to find policies that have access only to finite memories \cite{Mnih15Nature, Oh16ICML} or that are trained on short sequences \cite{Schmidhuber91NIPS,hausknecht2015deep}. Thus, if the agents memory is finite, one might well wonder whether the extra level of accuracy that an influence predictor which conditions on the full ALSHs provides is needed. 
% Generally speaking, we shall only simulate the world according to the level of detail that the agent will be able of perceiving it.
% One important consideration when training RNNs via backpropagation through time (BPTT) is to choose the right length for the input sequences \citep{williams1990efficient}. This determines the number of steps the network is backpropagated, and thus for how long past information can be retained. On the one hand, longer sequences will often provide more information to predict the influence sources, on the other, they will also make it harder to optimize the network. 
% Ideally, we would like to choose just the right sequence length such that the agent cannot perceive any distribution shift in the local transitions. 
The result below shows that, the history dependence of the influence predictor is,  under mild conditions, determined by the agent's memory capacity. 
\begin{restatable}{theorem}{history}
Let $l_{-k:t}$ be a truncated version of $l_t$ containing only the last $k$ action-local-state pairs and let $a_{0:k}$ be the sequence of past actions from time $0$ up to $k$. Let the agent's policy $\bar{\pi}$ be a function of $h_{-k:t}$ $\bar{\pi}(a_t|h_{-k:t})$, where $h_{-k:t}$ is also a truncated version of $h_t$. If for all $u_t$ we have $P(u_t|l_{-k:t},a_{0:k}) = P(u_t|l_{-k:t})$, then an influence predictor that conditions only on $l_{-k:t}$, $\bar{I}(u_t|l_{-k:t})$, is sufficient to guarantee no loss in value\footnote{Note that this refers only to the value of polices of the form $\bar{\pi}(a_t|h_{-k:t})$, which might perform arbitrarily worse than policies that condition on the full AOH, $\pi(a_t|h_t)$ \cite{Singh94ICML}.}.
\label{thm:history_dependence}
\vskip -0.5 in
\end{restatable}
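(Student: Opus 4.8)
The plan is to reduce the statement to an equality of trajectory distributions and then establish that equality by a short induction. First I would note that, because the reward $\dot R(x_t,a_t)$, the observation kernel $\dot O(o_{t+1}|x_{t+1})$ and the policy $\bar\pi(a_t\,|\,h_{-k:t})$ all depend on the global state only through the local variables $x$, the value $V^{\bar\pi}$ attained by $\bar\pi$ in \emph{any} model is a functional solely of the law that model induces over the local-state/action trajectory $(x_0,a_0,x_1,a_1,\dots)$ (in fact only over its length-$k$ windows). By the exactness of influence-based abstraction, the true POMDP and the exact IALM with $I(u_t|l_t)$ induce the same such law under $\bar\pi$; hence it suffices to show that the IALS with $\bar I(u_t|l_{-k:t})$ does too. ``No loss in value'' then follows for every policy of the form $\bar\pi(a_t|h_{-k:t})$, and a fortiori for the best one in that class.

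\textbf{Identifying the truncated influence.} Next I would use the hypothesis to show that on every history $l_t$ reachable under $\bar\pi$ the true influence conditional depends on $l_t$ only through its truncation, $I(u_t|l_t)=P(u_t|l_{-k:t})=\bar I(u_t|l_{-k:t})$. The assumption $P(u_t|l_{-k:t},a_{0:k})=P(u_t|l_{-k:t})$ is precisely the sufficiency statement that lets the pre-window portion of the history be discarded: since the only policy-dependent ingredient of the joint law over histories is the action-generation mechanism, once conditioning on the pre-window actions leaves the conditional over $u_t$ unchanged, the truncated conditional $P(u_t|l_{-k:t})$ is the same for every policy and in particular coincides with the object the AIP is fit to (this is also the precise sense in which, per Section~\ref{sec:policy_dependence}, a truncated AIP need not be relearned off policy).

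\textbf{The induction.} Finally I would show by induction on $t$ that the IALS-under-$\bar\pi$ and the exact-IALM-under-$\bar\pi$ agree on the law of the prefix $(l_{0:t},h_{0:t})$. The base case is immediate, since for $t\le k$ the truncated and full histories coincide and both models start from the same initial local-state distribution. For the step, given agreement up to time $t$: both models draw $a_t\sim\bar\pi(\cdot|h_{-k:t})$ from the \emph{same} conditional, then both draw $x_{t+1}$ from $\sum_{u_t}\dot T(x_{t+1}|x_t,u_t,a_t)\,q_t(u_t)$ with $q_t=I(u_t|l_t)$ in the exact IALM (as in Eq.~\eqref{eq:IALM}) and $q_t=\bar I(u_t|l_{-k:t})$ in the IALS; these mixtures coincide on every reachable $l_t$ by the previous step, so the law extends to time $t{+}1$, and emitting $o_{t+1}\sim\dot O(\cdot|x_{t+1})$ identically in both closes the step.

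\textbf{Main obstacle.} The delicate point is the middle step: turning the hypothesis, phrased in terms of the pre-window actions $a_{0:k}$, into the statement that the \emph{entire} pre-window history (local states included) is irrelevant to $u_t$ given $l_{-k:t}$, and hence that the truncated conditional is policy-invariant. This requires a careful reading of the conditional-independence (d-separation) structure of the unrolled DBN in Figure~\ref{fig:pomdp+ips} --- which nodes become active once we condition on the window --- together with the fact that actions enter the joint law only exogenously, so conditioning on the action history already absorbs the policy dependence. The reduction to trajectory laws and the induction bookkeeping are routine.
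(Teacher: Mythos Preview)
Your proposed induction needs the pointwise identity $I(u_t\mid l_t)=\bar I(u_t\mid l_{-k:t})$ on every reachable $l_t$, and that identity is not implied by the hypothesis. The assumption $P(u_t\mid l_{-k:t},a_{0:k})=P(u_t\mid l_{-k:t})$ only discards the pre-window \emph{actions}; it says nothing about the pre-window \emph{local states} $x_{0:k}$. In general $x_{0:k}$ can carry information about $u_t$ beyond the window (e.g.\ $x_0$ correlated with an exogenous $y_0$ that drives $u_t$, with actions irrelevant to the external chain; then the hypothesis holds trivially while $I(u_t\mid l_t)$ genuinely depends on $x_0$). In that case the conditional law of $x_{t+1}$ given the \emph{full} prefix $l_t$ differs between the exact IALM and your truncated IALS, so the prefix laws you are inducting on do \emph{not} agree, and the ``these mixtures coincide'' step fails. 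You flag this as the main obstacle, but the d-separation reading you gesture at cannot close the gap, because nothing in the DBN prevents $x_{0:k}\to y \to u_t$ paths that bypass the window.

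The paper's proof avoids this by never asserting pointwise agreement of the influence conditionals. It works directly with the finite-memory action-value $\bar Q^{\bar\pi}(h_{-k:t},a_t)=\sum_{h_{0:k}}P^{\bar\pi}(h_{0:k}\mid h_{-k:t})\,Q^{\bar\pi}(\langle h_{0:k},h_{-k:t}\rangle,a_t)$, pushes the expectation over the hidden pre-window history through the Bellman recursion, and observes that this averaging collapses $\sum_{l_{0:k}}I(u_t\mid l_{0:k},l_{-k:t})P^{\bar\pi}(l_{0:k}\mid l_{-k:t})$ to $\bar I^{\bar\pi}(u_t\mid l_{-k:t})$. Only \emph{then} is the hypothesis invoked, and only to drop the policy superscript (your claim (b), which is correct). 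Your argument can be repaired along the same lines: induct on the marginal law of the \emph{window} $(l_{-k:t},h_{-k:t})$ rather than the full prefix, so that the transition kernel you compare is $\sum_{u_t}\dot T(x_{t+1}\mid x_t,u_t,a_t)\,\bar I^{\bar\pi}(u_t\mid l_{-k:t})$ on the IALM side; this matches the IALS kernel once the hypothesis gives policy-invariance of $\bar I$. As written, though, the induction on full prefixes does not go through.
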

\renewcommand\qedsymbol{}
\begin{proof} 
Found in Appendix \ref{ap:proofs}.
\end{proof}
\vskip -0.1 in
Intuitively, a finite memory agent whose policy conditions on the last $k$ elements in the AOH is only be capable of computing an expectation over the action-values given these $k$ elements. In turn, the distribution of $u_t$ given the full ALSH $l_t$, $I(u_t|l_t)$, is effectively ``washed out'' by the same expectation.
The upshot is that the agent's memory capacity limits the temporal dependencies of the influence predictor\footnote{More details on the practical implications of this result are given in Appendix \ref{ap:imp_details}.}, meaning that $I$ may as well condition directly on $l_{-k:t}$ rather than $l_t$. This insight is empirically evaluated in Section \ref{sec:exp_history_dependence}.

\subsection{Off-policy generalization and d-sets}\label{sec:policy_dependence}
Given that the \emph{true influence distribution} conditions on the full ALSH, it is, in principle, independent of the exploratory policy $\pi_0$ that we use to collect the data. Indeed, if we would represent $I(u_t|l_t)$ with a table, we could compute unbiased estimates of the true probabilities $I(u_t|l_t)$ by using any arbitrary policy that visits every ALSH infinitely often. The problem arises when we use function approximators to estimate the influence sources. In particular, if we call  $P^{\pi_0}(l_t,u_t)$ the stationary joint distribution of influence sources and ALSHs induced by the exploratory policy $\pi_0$ that we use to collect the data $D^{\pi_0} = \left\{(l_1,u_1),... (l_N,u_N)\right\}$ from the GS (as described by Algorithm \ref{alg:collect} in Appendix \ref{ap:algorithms}), we can write
\begin{equation}
\small
    \theta^* = \argmin_\theta L(\hat{I}_\theta, D^{\pi_0}),
    \label{eq:loss_datadist}
\end{equation}
where we see that, in fact, the loss in \eqref{eq:loss} depends on $D^{\pi_0}$. Therefore, because we are fitting $\hat{I}_\theta$ to $D^{\pi_0}$, the model will be biased towards $P^{\pi_0}$. This is not so bad considering that we want the influence predictions to be more reliable for those ALSHs that the agent visits more often. However, it can pose a problem when the policy $\pi$ that we train starts deviating from $\pi_0$. In general, we want the AIP to perform  well on any distribution that the agent may experience during training, $\{P^\pi\}_{\pi \in \Pi}$,
%= \left\{P^\pi : \pi \in \Pi \right\}$ 
where $\Pi$ denotes the set of possible policies. We then rewrite the optimization problem in \eqref{eq:loss_datadist} as
\begin{equation}
\small
    \theta^* = \argmin_\theta \max_{\pi \in \Pi} L(\hat{I}_\theta, D^{\pi})
\label{eq:00D}
\end{equation}
% \begin{assumption}
% $\pi_0$ visits all AOHs infinitely often.
% \label{as:visit}
% \end{assumption}
to indicate that we wish to minimize the worst-case loss.

The key issue here is that, before training, we have only access to the exploratory policy $\pi_0$. Equation \eqref{eq:00D} is an instance of a well-studied problem in the field of out-of-distribution generalization \cite{quionero2009dataset,arjovsky2021out}. In principle, according to \citet[Section~3.6.1]{arjovsky2021out}, finding a solution to \eqref{eq:00D} by minimizing \eqref{eq:loss_datadist} requires, \textbf{(i)} $\supp(P^{\pi}) \subseteq \supp(P^{\pi_0})$, \textbf{(ii)} infinite number of samples, and \textbf{(iii)} infinite capacity models.

The first condition is met so long as $\pi_0(a_t | l_t) > 0$ for all $a_t$ and $l_t$. The second and third conditions, on the other hand, can never be fulfilled in practice. On top of that, high-capacity influence predictors are particularly undesirable for our purpose because they are computationally demanding. The consequence of training low-capacity models on finite datasets is that they may pick-up on spurious correlations \cite{Pearl2000Causality} between ALSHs and influence sources\footnote{\label{fn:example}A visual example of a spurious correlation appearing in the traffic problem is given in Appendix \ref{ap:example}.}. These correlations could be just an artifact of $\pi_0$ and may disappear after the policy is updated. One way to prevent the above, is to find a representation of $l_t$ that elicits an invariant predictor of $u_t$ across all $P^\pi$ \cite{peters2016causal,arjovsky2019invariant, krueger2021ood}.
% According to the work by \citet{arjovsky2021out}, we can only expect our influence predictor to generalize well across policies if, \textbf{(i)} $\supp(P^{\pi}) \subset \supp(P^{\pi_0})$ for all $\pi \in \Pi$, \textbf{(ii)} $D^{\pi_0}$ has infinitely many samples, and \textbf{(iii)} $\hat{I}_\theta$ has infinite capacity.
% The first condition can be easily met so long as the exploratory policy $\pi_0$ assigns non-zero probability to all actions at all times. That is, if $\pi_0(a_t | l_t) > 0$ for all $a_t$ and $l_t$, then $\supp(P^{\pi}) \subseteq \supp(P^{\pi_0})$ for all $\pi \in \Pi$. The second condition, on the other hand, can never be fulfilled in in practice. A finite dataset will contain
% many samples of those (h,u) pairs that occur more often under $\pi_0$, but very few, or none, of those that are
% rare. As for the third condition, not only it is unfeasible but is also undesirable for our purpose, as high-capacity influence predictors will also be computationally demanding. As a consequence, low-capacity models trained on finite dataset may pick-up on spurious correlations \cite{pearl2014probabilistic} between ALSHs and influence sources. These correlations could be just an artifact of $\pi_0$ and may disappear after the policy is updated. A visual example of a spurious correlation in the traffic problem is given in Appendix ??. 
% On way to prevent the above, is to find a representation of $l_t$ that elicits an invariant predictor of $u_t$ across all $P^\pi$ \cite{peters2016causal,arjovsky2021out}.
\begin{definition}[invariant predictor]
A subset of variables $d_t$ from $l_t$ elicit and invariant predictor of $u_t$ across policies  $\pi \in \Pi$ if for all $d_t$ in the intersection of supports, $\supp(P^{\pi_1}(d, u)) \cap \supp(P^{\pi_2}(d, u))$, we have
\begin{equation}
\small
    P^{\pi_1}(u_t| d_t) = P^{\pi_2}(u_t|d_t) \quad \text{for all} \quad \pi_1, \pi_2 \in \Pi.
\end{equation}
\end{definition}
The notion of d-separation \cite{Bishop06book} comes in handy here. Given a DBN, such as the one depicted in Figure \ref{fig:pomdp+ips}, one can determine a subset of variables in the ALSH that is sufficient to predict $u_t$ \cite{oliehoek2021sufficient}.
% Arjovsky: "In the absense of infinite data, as argued in subsection 3.6.1, overparameterized models trained with SGD tend to pick up on spurious correlations and data biases."
% PROBLEM WHEN SOME AOHS ARE RARELY VISITED. VERY SPECIFIC ACTION SEQUENCES ARE NEEDED. THIS IS A PROBLEM WHEN DATASET IS FINITE AND MODEL CAPACITY IS FINITE.
% The second condition can never be fulfilled in practice. As a consequence, models trained on finite datasets may pick up on spurious correlations \citep{pearl2014probabilistic} between AOHs and influence sources. These correlations could be just an artifact of $\pi_0$ and may disappear after the policy is updated. On way to prevent this, is to find a representation of $l_t$ that elicits an invariant predictor of $u_t$ across all $P^\pi$ \cite{arjovsky2021out, peters2016causal}.
%  The first step towards solving \eqref{eq:minmax} is to find a representation of $l_t$ that elicits an invariant predictor of $u_t$ across all $D^\pi$ \cite{arjovsky2021out}. This is so that spurious correlations \citep{pearl2014probabilistic} between observation variables and influence sources that only occur under $\pi_0$ do not interfere in correctly predicting $u_t$ when the policy is updated. We can again use our knowledge about the 
\begin{definition}[d-separating set] The d-separating set (d-set) is a subset of variables $d_t$ from $l_t$, such that influence sources $u_t$ and the remaining parts of the ALSH $l_t \setminus d_t$ are conditionally independent given $d_t$, $ (u_t \ind l_t \setminus d_t \mid d_t) $.  The d-set is said to be minimal $d^*_t$, when no more variables can be removed from $d^*_t$ while the above still holds.
\label{def:dset}
\end{definition}
% \begin{proposition}
% Let $d_t$ be a minimal d-set such that if any variable is  removed from $d_t$, Definition \ref{def:dset} no longer holds. Then $d_t$ elicits an invariant predictor of $u_t$.
% \end{proposition}
\begin{restatable}{theorem}{invariant}
A subset of variables $d_t$ from $l_t$ is only guaranteed to elicit an invariant predictor of $u_t$, across all $\pi \in \Pi$, if (i) $d_t$ constitutes a d-separating set and (ii) all policies are functions of the local AOH, $\pi(h_t)$.\label{thm:invariant}
\end{restatable}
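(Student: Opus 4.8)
The plan is to unpack the two implications hidden in the phrase ``only guaranteed \dots if'': first prove the sufficiency direction — if $d_t$ is a d-separating set and every policy conditions only on the local AOH, then $P^\pi(u_t\mid d_t)$ is the \emph{same} function for all $\pi\in\Pi$ on the intersection of supports — and then, to justify the word ``only'', exhibit the failure modes that appear when either hypothesis is dropped.

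For sufficiency I would argue directly on the DBN of the Local-POMDP (Figure~\ref{fig:pomdp+ips}). The first step is to show that under (ii) the conditional $P^\pi(u_t\mid l_t)$ does not depend on $\pi$. Writing the joint $P^\pi(l_t,u_t)$ as the product of the DBN conditional probability distributions, the policy enters only through factors $\bar\pi(a_\tau\mid h_\tau)$ where $h_\tau$ is a deterministic (measurable) function of $l_\tau$, since local observations are generated from local states via $\dot O$. Those factors are therefore constant on the event that fixes $l_t$, so they cancel between numerator and denominator of $P^\pi(u_t\mid l_t)=P^\pi(l_t,u_t)/P^\pi(l_t)$, leaving $P^\pi(u_t\mid l_t)=I(u_t\mid l_t)$, the policy-independent influence distribution of the IALM. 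The second step uses (i): d-separation of $u_t$ from $l_t\setminus d_t$ by $d_t$ implies that every distribution that factorizes over this DBN — in particular each $P^\pi$ with $\pi$ a function of $h_t$ — satisfies $(u_t\ind l_t\setminus d_t\mid d_t)$, so $P^\pi(u_t\mid d_t)=P^\pi(u_t\mid l_t)=I(u_t\mid l_t)$ wherever the conditionals are defined. That is exactly invariance across $\pi$.

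For the ``only'' part I would give two minimal counterexamples showing the guarantee is genuinely lost when either condition fails. If $d_t$ is not a d-separating set, then $I(u_t\mid l_t)$ depends nontrivially on some component of $l_t\setminus d_t$, and since $P^\pi(u_t\mid d_t)=\sum_{l_t\setminus d_t} I(u_t\mid l_t)\,P^\pi(l_t\setminus d_t\mid d_t)$ is a $\pi$-dependent mixture of those values, a small two-state example with two policies inducing different $P^\pi(l_t\setminus d_t\mid d_t)$ yields $P^{\pi_1}(u_t\mid d_t)\neq P^{\pi_2}(u_t\mid d_t)$. If instead some $\pi$ is allowed to condition on more than $h_t$ (say on a non-local variable $y_\tau$, or on the full state), then the action $a_\tau$ carries information about $y_\tau$ beyond what $l_\tau$ encodes, the cancellation in the first step breaks, $P^\pi(u_t\mid l_t)$ becomes $\pi$-dependent, and even the maximal choice $d_t=l_t$ fails to be invariant.

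The delicate step is the first one: making the DBN factorization rigorous across variable-length histories and being precise that $h_\tau$ is a measurable function of $l_\tau$ so that the policy factors really cancel, and then stating carefully that d-separation transfers to every $P^\pi$ in the class — this is where hypothesis (ii) is quietly reused, because a policy reading $y_\tau$ would introduce DBN edges that destroy the very d-separation invoked in (i). The counterexamples themselves become routine once the mixture identity $P^\pi(u_t\mid d_t)=\sum_{l_t\setminus d_t} I(u_t\mid l_t)\,P^\pi(l_t\setminus d_t\mid d_t)$ is available.
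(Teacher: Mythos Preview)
Your proposal is correct and arguably more complete than the paper's own argument, but it takes a different route. The paper proves sufficiency by treating the policy $\pi$ as a node in a Pearl-style causal graph and invoking the direct-path/backdoor-path criteria: condition~(i) ensures every direct path from the action sequence $a_{0:t}$ to $u_t$ is blocked by $d_t$, and condition~(ii) rules out backdoor paths from $\pi$ to $u_t$ through the non-local variables $y$, whence $u_t \ind \pi \mid d_t$. Your argument is instead a direct probabilistic one: you write out the DBN factorization, observe that under~(ii) every policy factor $\pi(a_\tau\mid h_\tau)$ is measurable with respect to $l_t$ and therefore cancels in $P^\pi(u_t\mid l_t)$, and then use~(i) to collapse $l_t$ to $d_t$ via the conditional-independence reading of d-separation. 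Your route is more elementary and self-contained, avoiding the vocabulary of backdoor adjustment; the paper's route is terser and leans on familiarity with Pearl's framework. You also go beyond the paper by supplying counterexamples for the ``only'' direction --- the paper's proof, despite the phrasing of the theorem, establishes sufficiency alone and does not exhibit failures when (i) or (ii) is dropped. One small caution on your step~2: make sure to phrase the conclusion as ``$P^\pi(u_t\mid d_t)$ is a fixed function of $d_t$'' rather than equating it to $I(u_t\mid l_t)$, since the latter notationally still carries the full $l_t$; the point is precisely that under d-separation $I(u_t\mid l_t)$ factors through $d_t$.
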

\begin{proof}
Found in Appendix \ref{ap:proofs}.
\end{proof}
% A REPRESENTATION THAT DOES NOT INCLUDE THE D-SET IS NOT INVARIANT. THIS IS IMPORTANT SINCE IN MANY SITUATIONS FINITE CAPACITY MODELS MIGHT CHOOSE REPRESENTATIONS THAT REQUIRE LOWER CAPACITY BUT THAT ARE NOT INVARIANT. AN EXAMPLE OF THIS IS GIVEN IN APPENDIX ??.
\vskip -0.1 in
Note that, according to the result above, the full ALSH $l_t$ does elicit an invariant predictor of $u_t$ since it is, by definition, a d-set. Hence, feeding $l_t$ should be sufficient for the model to find stable and invariant correlations. The problem, however, is that in practice, models tend to converge to low-capacity solutions that require little ``effort" to learn (i.e. least-norm solution) \cite{arjovsky2021out}. As such, the representations formed by an influence predictor that is fed the full ALSH may or may not constitute a d-set. The solution we propose is to feed solely a minimal d-set $d_t^*$ into the AIP $\hat{I}_\theta$. Not only does this reduce the dimensionality of the input space but also prevents the AIP from learning shortcuts \cite{geirhos2020shortcut} resulting from spurious correlations that are not stable under policies other than $\pi_0$\footref{fn:example}. Provided we have the DBN of the problem, there are many algorithms available that can help us find a minimal d-set \cite{Acid1996Algorithm, tian1998finding}. If not, some domain knowledge may be sufficient in most cases, to remove a few variables from $l_t$ and prevent confounding.

Working in the joint space $(d_t, u_t)$ instead of $(l_t, u_t)$ has a another positive effect on off-policy generalization. Since $l_t$ includes the agents actions the distribution $P^{\pi_0}(l,u)$ might be arbitrarily far from a distribution in  $\{P^\pi(l,u)\}_{\pi \in \Pi}$. This is problematic because our low-capacity influence predictor may be unable to generalize well to large distribution shifts.

\begin{restatable}{proposition}{dd}\label{prop:distance}
Let $\pi_0$ be the exploratory policy used to collect the dataset. Then, for all $\pi \in \Pi$ 
\begin{equation*}
\small
\begin{split}
    KL(P^{\pi_0}(d_t, u_t) || P^{\pi}(d_t, u_t)) \leq  KL(P^{\pi_0}(l_t, u_t) || P^{\pi}(l_t, u_t))
\end{split}
\end{equation*}
\end{restatable}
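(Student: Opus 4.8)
The plan is to recognize this inequality as an instance of the \emph{data processing inequality} for the Kullback--Leibler divergence, i.e.\ the fact that marginalizing out variables can only decrease a KL divergence. The crucial structural observation is that $d_t$ is, by Definition~\ref{def:dset}, literally a sub-collection of the action-local-state variables appearing in $l_t$; hence the map $(l_t, u_t) \mapsto (d_t, u_t)$ that simply forgets the coordinates in $m_t := l_t \setminus d_t$ is a fixed, deterministic, \emph{policy-independent} projection, and it sends $P^{\pi}(l_t, u_t)$ to its marginal $P^{\pi}(d_t, u_t)$ for every $\pi$, in particular for $\pi_0$. Note that the d-separation property itself is not needed for this bound; all we use is $d_t \subseteq l_t$.

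Concretely, I would write $l_t = (d_t, m_t)$ and apply the chain rule for the KL divergence to the joint distributions over $(d_t, m_t, u_t)$:
\begin{equation*}
\small
\begin{split}
    KL\big(P^{\pi_0}(l_t, u_t) \,\|\, P^{\pi}(l_t, u_t)\big)
    = KL\big(P^{\pi_0}(d_t, u_t) \,\|\, P^{\pi}(d_t, u_t)\big)
    + \EX_{P^{\pi_0}(d_t, u_t)}\!\Big[ KL\big(P^{\pi_0}(m_t \mid d_t, u_t) \,\|\, P^{\pi}(m_t \mid d_t, u_t)\big) \Big].
\end{split}
\end{equation*}
Since every conditional KL term inside the expectation is non-negative, dropping it yields exactly the claimed inequality.

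The argument involves essentially no hard step; the only points requiring care are (a) verifying that the $d_t$-marginal of $P^{\pi}(l_t,u_t)$ is indeed $P^{\pi}(d_t,u_t)$, which holds because summing over $m_t$ commutes with the structural definition of $d_t$, and (b) ensuring that the same projection is used for both measures, which is guaranteed because the d-set is read off from the DBN and does not depend on the policy. If one prefers a more self-contained derivation, the chain-rule identity above can itself be obtained by expanding both sides of $KL(P^{\pi_0}\,\|\,P^{\pi})$ with $P^{\pi}(d,m,u) = P^{\pi}(m\mid d,u)\,P^{\pi}(d,u)$ and separating the logarithm, which is the only routine computation needed.
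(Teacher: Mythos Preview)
Your proof is correct and is in fact a cleaner route than the paper's. You apply the chain rule for KL directly to the decomposition $(l_t,u_t)=(d_t,u_t,m_t)$ and drop the non-negative conditional term, so the only hypothesis you use is $d_t\subseteq l_t$. The paper takes a detour: it first invokes Theorem~\ref{thm:invariant} (invariance of $P(u_t\mid d_t)$ and of $P(u_t\mid l_t)$ across policies) to cancel the $u_t$-factor on both sides, obtaining the intermediate equalities $KL(P^{\pi_0}(d_t,u_t)\|P^{\pi}(d_t,u_t))=KL(P^{\pi_0}(d_t)\|P^{\pi}(d_t))$ and likewise with $l_t$, and only then applies the chain rule to $l_t=(d_t,\,l_t\setminus d_t)$. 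Your argument is therefore more general (it holds for any subset $d_t\subseteq l_t$, d-separating or not) and does not rely on Theorem~\ref{thm:invariant}. What the paper's approach buys is those intermediate equalities themselves: they are reused verbatim in the proof of Corollary~\ref{prop:independence}, so if you adopt your shorter proof here you would need to supply that equality separately when proving the corollary.
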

\begin{proof} 
Found in Appendix \ref{ap:proofs}.
\end{proof}
% where $KL$ is the Kullback–Leibler divergence.

% The result above shows that the distributions of pairs $(d^*, u)$ induced by two policies in $\Pi$ are never further apart than their full AOHs counterparts. 
In fact, if actions and d-sets are only weakly-coupled the distributions may be very close. In the extreme case, if d-sets are causally independent with respect to the agent's actions, the influence sources can be considered exogenous \cite{Boutilier96UAI,chitnis2020exogenous} and the two distributions are equivalent. 
% In fact, if the temporal dependencies between actions and d-sets are short enough, the chances of visiting all d-sets with a uniform random policy may be quite high. 
\begin{restatable}{corollary}{equivalent}
Let $a_{0:t}$ be the past sequence of actions from $0$ to $t$. If $P(d_t| \text{do}(a_{o:t}) ) =  P(d_t)$ for all $d_t$ and $a_{0:t}$. Then, for any $\pi_1 \neq \pi_2 : \pi_1,\pi_2 \in \Pi$ we have $P^{\pi_1}(d_t, u_t) = P^{\pi_2}(d_t, u_t)$ even though $P^{\pi_1}(l_t, u_t) \neq P^{\pi_2}(l_t, u_t)$.
\label{prop:independence}
\end{restatable}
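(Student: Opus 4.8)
The plan is to show that under the hypothesis the joint $P^{\pi}(d_t,u_t)$ factorises into two pieces, each of which is the same for every policy. Throughout I would keep the standing assumptions under which d-sets are introduced, namely that $d_t$ is a d-separating set (\Cref{def:dset}) and that every $\pi\in\Pi$ conditions only on the local AOH, so that \Cref{thm:invariant} applies. First I would write $P^{\pi}(d_t,u_t)=P^{\pi}(u_t\mid d_t)\,P^{\pi}(d_t)$ and invoke \Cref{thm:invariant} on the first factor: since $d_t$ is a d-set and all policies are functions of the local AOH, $P^{\pi}(u_t\mid d_t)$ does not depend on $\pi$, so I may write it $P(u_t\mid d_t)$. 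What then remains is to prove that the d-set marginal $P^{\pi}(d_t)$ is also policy-independent.

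For that step I would regard execution of a policy $\pi$ in the DBN of Figure~\ref{fig:pomdp+ips} as replacing each action node $a_i$ by the history-dependent stochastic intervention $\text{do}\bigl(a_i\sim\pi(\cdot\mid h_i)\bigr)$, and write the truncated factorisation of the DBN under it: $P^{\pi}(d_t)$ equals the product of the intact structural conditionals $P(v\mid\text{pa}(v))$, summed over all non-$d_t$ variables and actions against the policy factors $\prod_i\pi(a_i\mid h_i)$. The hypothesis $P(d_t\mid\text{do}(a_{0:t}))=P(d_t)$ for all $a_{0:t}$ says exactly that the sub-product of conditionals that generates $d_t$ (the conditionals of $d_t$ and its ancestors) contains no action variable, equivalently that $d_t$ is a non-descendant of $a_0,\dots,a_t$; marginalising over everything else then makes the policy factors sum to one and leaves $P^{\pi}(d_t)=P(d_t)$ for every $\pi\in\Pi$. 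Combining the two factors, $P^{\pi}(d_t,u_t)=P(d_t)\,P(u_t\mid d_t)$ independently of $\pi$, hence $P^{\pi_1}(d_t,u_t)=P^{\pi_2}(d_t,u_t)$.

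For the concluding ``even though'' clause I would simply observe that $l_t=\langle x_1,a_1,\dots,a_{t-1},x_t\rangle$ records the realised actions, so $P^{\pi}(l_t,u_t)$ carries the factors $\pi(a_i\mid h_i)$; whenever $\pi_1$ and $\pi_2$ disagree on some history of positive probability these two joints differ, which is precisely the contrast the corollary highlights and is consistent with \Cref{prop:distance} (passing from $l_t$ down to the d-set $d_t$ erases the residual action-dependence). The hard part will be the second step: upgrading the assumed invariance of $P(d_t)$ under every \emph{deterministic} intervention on the actions to invariance under an arbitrary history-dependent \emph{stochastic} policy. I expect the cleanest justification to be the truncated-factorisation / do-calculus argument above (equivalently, reading the hypothesis, under faithfulness, as the statement that $d_t$ is a non-descendant of all action nodes, from which policy-invariance of its marginal is immediate); an alternative is an induction over time steps that pushes the intervention through one action node at a time, applying the hypothesis at each stage.
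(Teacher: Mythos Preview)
Your proposal is correct and follows essentially the same logic as the paper: factor $P^{\pi}(d_t,u_t)=P^{\pi}(u_t\mid d_t)\,P^{\pi}(d_t)$, use \Cref{thm:invariant} to make the conditional policy-free, and use the do-hypothesis to make the marginal policy-free. The only cosmetic difference is that the paper routes the argument through the KL identity $KL(P^{\pi_1}(d_t,u_t)\,\|\,P^{\pi_2}(d_t,u_t))=KL(P^{\pi_1}(d_t)\,\|\,P^{\pi_2}(d_t))$ already established in the proof of \Cref{prop:distance}, whereas you redo the factorisation directly; your treatment of the do-invariance $\Rightarrow$ policy-invariance step (via truncated factorisation) is in fact more explicit than the paper's one-line appeal to ``$\pi$ can only influence the environment through its actions.''
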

\begin{proof}
Found in Appendix \ref{ap:proofs}.
\end{proof}
% On the other hand, since $l_t$ includes the agent's past actions, $P^{\pi_1}(l_t, u_t) = P^{\pi_2}(l_t, u_t)$ only if $\pi_1 = \pi_2$.
% EXOGENEOUS VARIABLES TOM SILVER. Jinke's reviews! Leslie!

Of course, in some domains, certain instantiations of the influence sources may only occur when very specific action sequences are followed. In such cases, a finite dataset collected with a random uniform policy will not be sufficient to fully cover the joint space of influence sources and d-sets, and thus the AIP may be unable to generalize well.  If this happens, assuming a GS is available during training, the obvious solution would be to retrain the AIP every certain number of policy updates. Nonetheless, we argue that in most cases,
such as in the two environments explored in this paper, 
the situation will be close to that of Corollary \ref{prop:independence}.
\subsection{Sufficiently similar training conditions}
\label{sec:sufficiently_similar}
Up to this point we have discussed the importance of training accurate AIPs.
% that are also invariant to distribution shifts.
% Moreover, we argued that this is particularly relevant in the offline learning setting if we want to remain wary about a potential data distribution shift. 
Nonetheless, here we present a different perspective: to what extent is it possible to achieve near-optimal performance with inaccurate AIPs? As a matter of fact, it is very common in real-life engineering to model individual components in isolation without considering whether they belong to a larger system. Here we will fall short of giving a complete answer but we will make some observations to suggest that agents might not always require the most accurate AIPs. A view that is also supported by our experiments.

First, 
% if real trajectories are somewhat likely under an inaccurate IALS; that is, 
if the IALS can produce at least a few observation sequences that are similar to the ones generated by the true environment, an agent with sufficient capacity will be able to learn from those, and perform well in the real world. 
% As improbable as this may sound, 
Given that one of the premises of our method is the need for accurate LS, even if the influence predictor was entirely random, the chances of getting just a small percentage of useful experiences may be in fact quite high. 
% Yet, 
% as discussed in Section \ref{sec:ILS}, 
% different frequencies in some of the transitions might change the value of certain ALSHs and in turn affect the agent's policy. 
Our second argument, is that different influence distributions might still lead to the same, or very similar, best-response \citep{Becker03AAMAS}. That is, even if the agent is trained on an inaccurate IALS, some of the strategies that the agent will develop might be transferable and could also be useful when followed under real trajectories \citep{lazaric2012transfer}.

\section{Experiments}\label{sec:experiments}

% ISSUES:

% \begin{enumerate}
%     \item  Either (1) the wearehouse environment does not require memory, or (2) RNNs fail to converge to a better policy than FNNs despite memory being important.
% \begin{itemize}
%     \item Actions:
% \begin{itemize}
%     \item  Verify PPO and RNN implementation
%     \item Design a simpler environment where the need for memory is more evident.
% \end{itemize}
% \end{itemize}
%     \item Inaccurate influence predictors seem to facilitate learning and lead to better sample complexity than GS.
% \begin{itemize}
%     \item Actions:
% \begin{itemize}
%     \item Verify hypothesis: inaccurate influence predictors are lower variance and thus the resulting POMDP is easier to learn from.
%     \begin{itemize}
%         \item Add noise to observation
%         \item Add noise to reward
%     \end{itemize}
%     \item Try value-based method. 
% \end{itemize}
% \end{itemize}
% \end{enumerate}

The performance of our method is empirically evaluated on two benchmark domains: traffic control and warehouse commissioning. The goal of the experiments is to:
(1) Study whether we can reduce training times by replacing the GS with the IALS. (2) Measure the impact of the AIP accuracy on the learning process and the agent's final performance.
(3) Investigate the memory needs of the AIP when the agent's memory is finite.
    % \item Study in which situations it is important to retrain the influence predictor after the policy has been updated. THE RESULTS SHOW THAT IN NONE OF THE TWO ENVIRONMENTS RETRAINING $\hat{\mathcal{I}}$ IS NECESSARY. IN FACT, THE INF MODEL ACCURACY DECREASES AS THE POLICY GETS BETTER. NOT SURE IF I SHOULD COMMENT ON THIS OR JUST SKIP IT COMPLETELY. NOTE THAT THIS IS ONE OF THE TOPICS IN SECTION 4.1 (POLICY DEPENDENCE). IDEALLY, I WOULD HAVE LIKED TO INCLUDE A EXAMPLE WHERE RETRAINING THE INF MODEL IS NECESSARY.
% \end{enumerate}
\subsection{Experimental setup}
Agents are trained separately with PPO \cite{schulman2017proximal} on (1) the global simulator (GS), (2) the influence-augmented local simulator (IALS) with an AIP trained offline on a dataset collected from the GS, (3) an IALS that uses an untrained AIP to make predictions (untrained-IALS).
% the fixed influence augmented local simulator (F-IALS) for 2M timespteps. The latter is a simplified version of the IALS for which $P(u_t)$ is set to some fixed value independent of the ALSH. 
To measure the agent's performance, training is interleaved with periodic evaluations on the GS. The results are averaged over 5 runs with different random seeds.
% \footnote{Please refer to Appendix \ref{ap:imp_details} for implementation details.}.
\begin{figure}
% \vskip -0.2in
\begin{center}
\centerline{\includegraphics[width=.65\columnwidth]{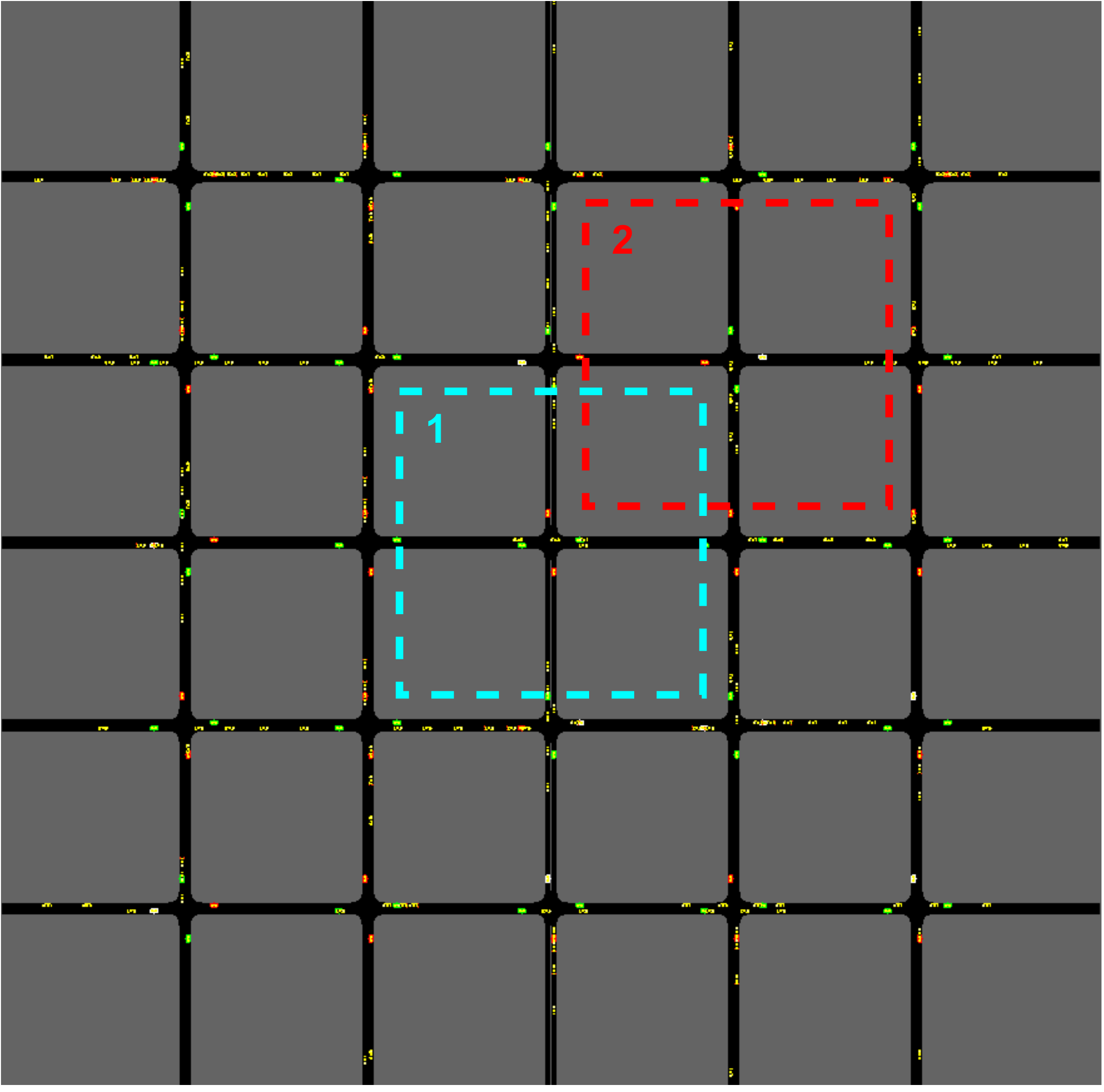}}
\caption{A screenshot of the traffic environment.}
\label{fig:trafficgrid}
\end{center}
\vskip-0.3in
\end{figure}
\subsection{Traffic control}\label{sec:traffic}
Figure \ref{fig:trafficgrid} shows a grid-like traffic network composed of 25 intersections. The agent controls the traffic lights at one of the intersections only. The rest of the traffic lights are controlled by fixed actuators that use sensors to adapt to the traffic. The policies used in this experiment for the actuated traffic light controllers were extensively optimized by \citet{wu2017flow}. We train agents separately on the two intersections highlighted in Figure \ref{fig:trafficgrid}. The goal is to maximize the average speed of cars within the intersection. The agent can only observe cars inside the dashed boxes. 
\subsubsection{GS, LS, AIP and D-SET}
The GS  and LS are built using Flow \citep{wu2017flow} and SUMO \citep{SUMO2018}. The GS simulates the entire traffic grid (Figure \ref{fig:trafficgrid}) while the LS only models the local neighborhood of the intersection being trained (Figure \ref{fig:local_sim} in Appendix \ref{ap:local_sim}). The influence sources $u_t$  are binary variables indicating whether or not a car will be entering the simulation from each of the four incoming lanes at the current timestep. The AIP $\hat{I}_\theta$ is a feedforward neural network that we train offline on a dataset of $(d_t, u_t)$ pairs collected from the GS. The  d-set $d_t$ is a length 37 binary vector encoding the location of cars along the four incoming lanes. Traffic light information is not included in $d_t$ to prevent confounding\footnote{A visual example of a spurious correlation appearing in the
traffic problem is given in Appendix \ref{ap:example}.}(Section \ref{sec:policy_dependence}). Since the two intersections highlighted in figure \ref{fig:trafficgrid} are influenced differently by the rest of the traffic network we trained separate AIPs for each of them.
% The F-IALS, on the other hand, samples from a fixed binomial distribution.
\subsubsection{Results}
\begin{figure}
% \vskip -0.2 in
\begin{center}
\centerline{\includegraphics[width=1.0\columnwidth]{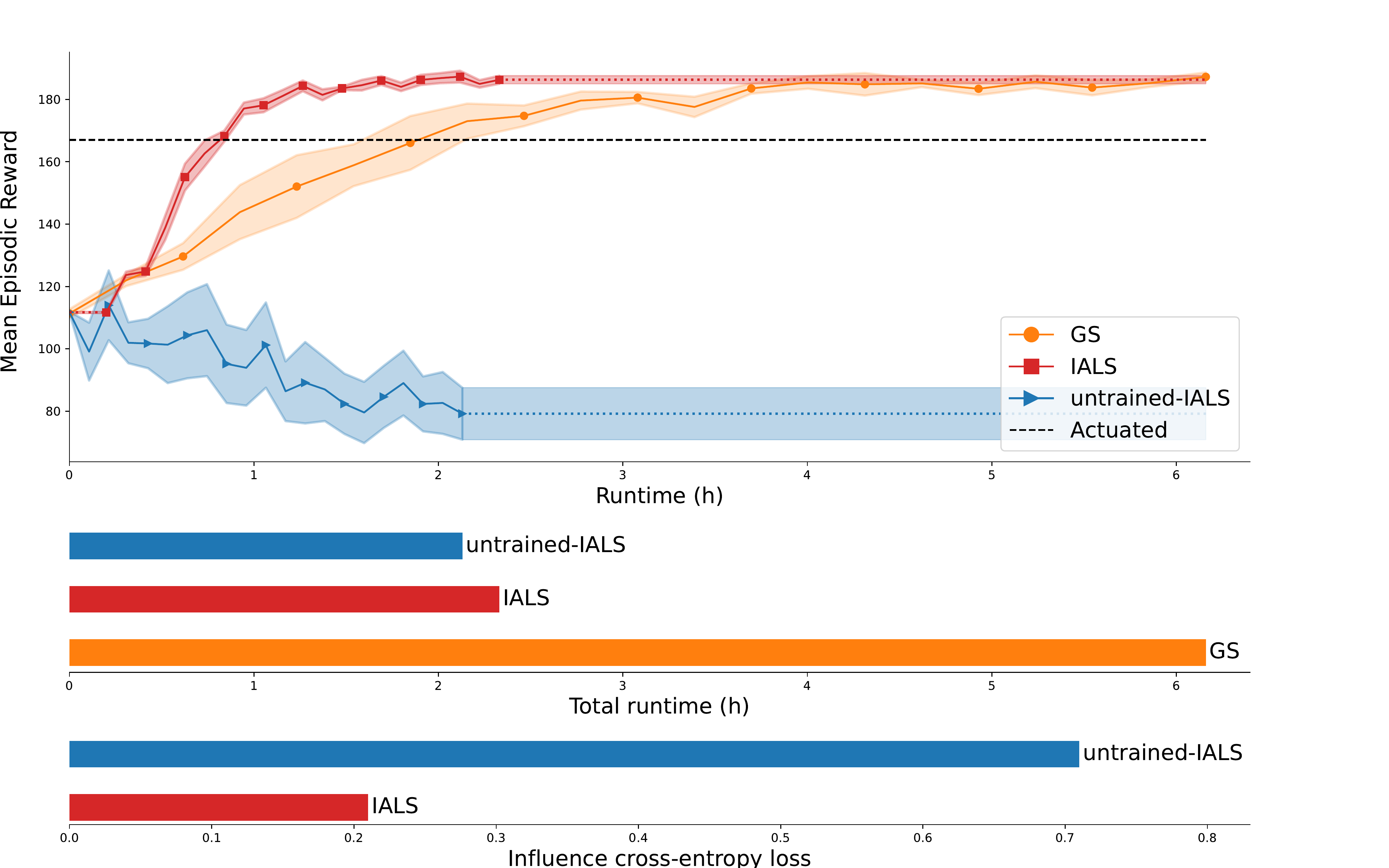}}
\end{center}
\vskip -0.2in
\caption{\textbf{Top:} Learning curves of agents trained with the GS, the IALS and the untrained-IALS on intersection 1 (Figure \ref{fig:trafficgrid}) as a function of wall-clock time.
% The dotted horizontal lines show the final performance of the agents after 2M timesteps of training. The dotted horizontal line at the begining of the red curve corresponds to the AIP training time. 
\textbf{Middle:} Total runtime of training for 2M training steps on the three simulators. \textbf{Bottom:} Cross entropy loss for the trained and untrained AIPs.}
\vskip -0.1in
\label{fig:traffic_curves}
\end{figure}
The plot at the top of Figure \ref{fig:traffic_curves} are the learning curves of agents trained with the GS, the IALS, and the untrained-IALS to control the traffic lights at intersection 1 (Figure \ref{fig:trafficgrid})\footnote{
Results for intersection 2 are provided in Appendix \ref{ap:intersection2}.}. The plot shows the mean episodic reward as a function of real wall-clock time. Agents are trained for 2M timesteps on all three simulators. The dotted horizontal lines at the end of the red and blue curves show the agent's final performance. The short horizontal line at the beginning of the red curve represents to the AIP's training time.  The black horizontal line indicates the performance of the actuated traffic light controller. The two bar charts at the bottom show the total training time when using each of the three simulators, and the AIP's accuracy with and without training. The results suggest that policies trained on the IALS (red) can match the performance of those trained on the GS (orange) in about $1/3$ of the total training time, despite the IALS is not as accurate as the GS. This is in line with our hypothesis in Section \ref{sec:sufficiently_similar}. Similar influence distributions, $I(u_t|l_t) \approx \hat{I}_\theta(u_t|l_t)$, may lead to the same or very similar local best response. More experiments exploring this phenomenon are provided in Appendix \ref{ap:similar_training_results}. On the other hand, since the distribution $P^\pi(l_t, u_t)$ induced by the untrained AIP is very different from the true distribution, as evidenced by the high cross entropy loss (blue bar bottom chart), agents trained on the untrained-IALS (blue) perform much worse.  

\subsection{Warehouse commissioning}
A team of $36$ robots (blue) need to fetch the items (yellow) that appear with probability $0.02$ on the shelves (dashed black lines) of the warehouse in Figure \ref{fig:warehouse}. Each robot has been designated a $5 \times 5$ square region and can only collect the items that appear on the shelves at the edges. The regions overlap so that each of the $4$ item shelves in a robot's region is shared with one of its $4$ neighbors. The blue robots have been programmed to go for the oldest item in their region. The purple robot that is inside the region highlighted by the red box, on the other hand, still needs to be trained. This robot receives as observations a bitmap encoding its own location and a set of $12$ binary variables that indicate whether or not each of the $12$ items within its region needs to be collected. The purple robot, however, cannot see the location of the other robots even though all of them are directly or indirectly influencing it through their actions.
\begin{figure}
% \vskip -0.2in
\begin{center}
\centerline{\includegraphics[width=.65\columnwidth]{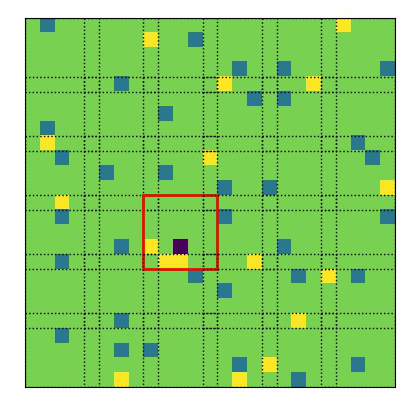}}
\vskip -0.2in
\caption{A screenshot of the warehouse environment.}
% The purple agent needs to collect the items that appear on the shelves (dashed black lines) located at the edges of its designated region (dashed red box). The blue agents are programmed to always go for the oldest item in their region. Regions overlap so that each of the $4$ shelves in a robot's region is shared with one of its $4$ neighbors
\label{fig:warehouse}
\end{center}
\vskip -0.4in
\end{figure}
\subsubsection{GS, LS, AIP and D-SET}
The GS simulates the entire warehouse (Figure \ref{fig:warehouse}), while the LS models only the $5 \times 5$ square region delimited by the red box (Figure \ref{fig:local_sim} in Appendix \ref{ap:local_sim}). The influence sources $u_t$ encode the location of the four neighbor robots. The AIP is a GRU \citep{Cho2014Learning} that we train offline on a dataset of $(d_t, u_t)$ pairs collected from the GS. If the AIP predicts that any of the neighbor robots is at one of the $12$ cells within the red box and there is an active item on that cell, that item is removed and the purple robot can no longer collect it.  The d-set $d_t$ includes the history of the 12 item variables and 12 additional binary variables encoding whether or not the controlled robot was (is) at one of the item locations. The latter variables are meant to differentiate between an item that is gone because the controlled robot collected it from an item that was picked up by the neighbor robots. The rest of variables in $l_t$ (i.e. the robot's history of locations) are unnecessary for predicting $u_t$, and thus susceptible of becoming confounders (Section \ref{sec:policy_dependence}).
% The F-IALS on the other hand, uses a fixed multinomial distribution.

\subsubsection{Results}
The plot at the top of Figure \ref{fig:warehouse_curves} shows the learning curves of the warehouse robot as a function of real wall-clock time when trained on the GS and the two local simulators, IALS and untrained-IALS. Agents are trained for 2M timesteps on all three simulators. The dotted horizontal lines at the end of the red and blue curves show the agent's final performance. The short horizontal line at the beggining of the red curve represents the AIP's training time. The two bar charts at the bottom show the total training time when using each of the three simulators, and the AIP's accuracy with and without training. Again, we see that robots trained on the IALS (red) are able to reach the same performance as those trained on the GS (orange) in about $1/3$ of the total training time despite the IALS is only approximate. Moreover, robots trained on the untrained-IALS (blue) perform reasonable well on the GS. Although the frequency at which items disappear with the untrained-IALS differs very much from that of the true environment, the basic strategy on how to collect items can still be learned. These results further confirm our hypothesis that inaccurate simulators may, in some cases, render effective policies (Section \ref{sec:sufficiently_similar}). More experiments experiments exploring this phenomenon are provided in Appendix \ref{ap:similar_training_results}.
\begin{figure}[tb]
\vskip -0.1in
\begin{center}
\centerline{\includegraphics[width=\columnwidth]{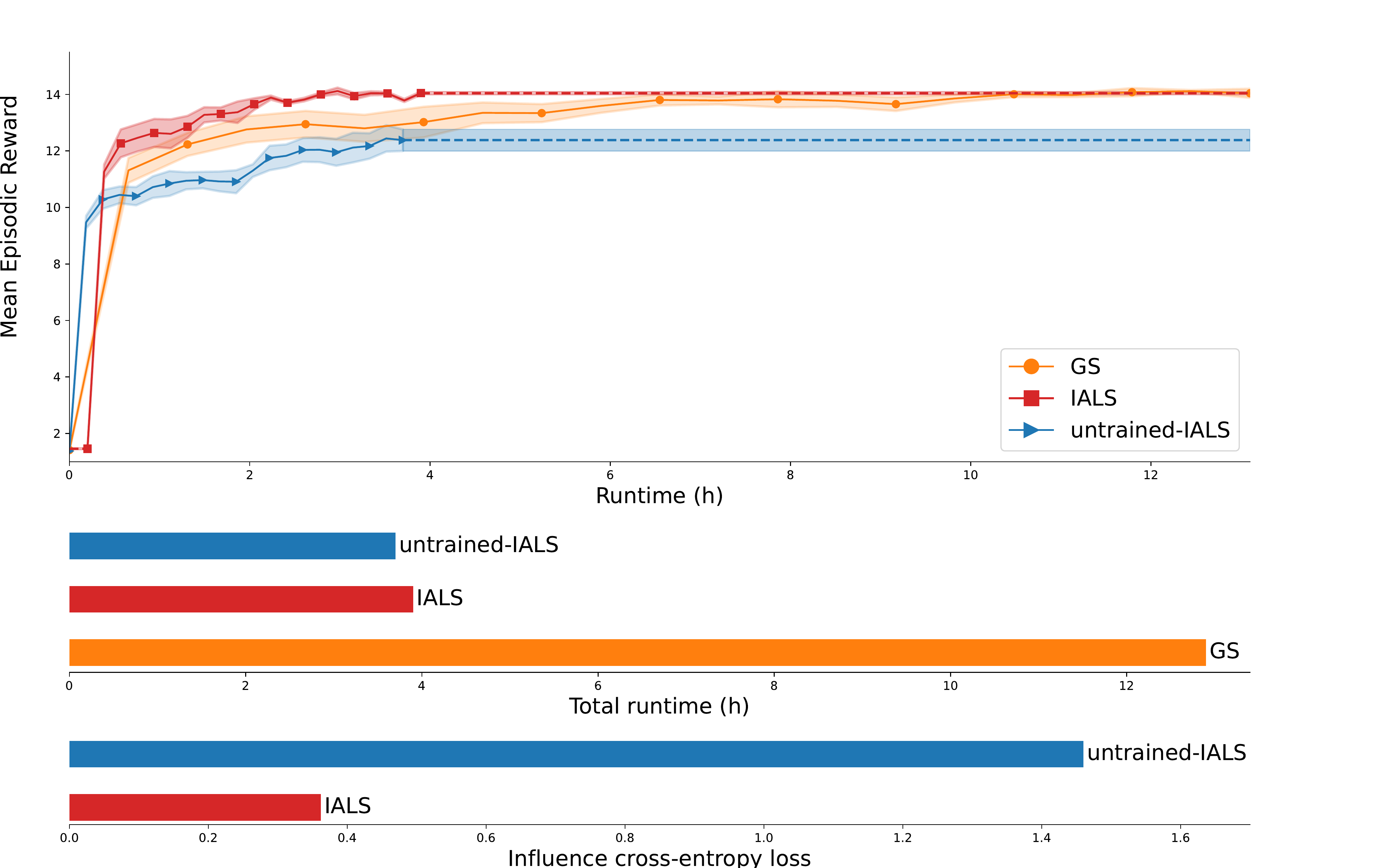}}
\caption{\textbf{Top:} Learning curves of agents trained with the GS, the IALS and the untrained-IALS on the the warehouse environment as a function of wall-clock time. 
% The dotted horizontal lines at the end of the red and blue curves show the final performance of the agents after 2M timesteps of training. The dotted horizontal line at the begining of the red curve corresponds to the AIP training time.
\textbf{Middle:} Total runtime of training for 2M steps on the three simulators. \textbf{Bottom:} Cross entropy loss for the trained and untrained AIPs.}
\label{fig:warehouse_curves}
\end{center}
\vskip -0.4in
\end{figure}
% The cross-entropy of the F-IALS is again higher than that of the learned influence predictor $\hat{I}_\theta(u_t|l_t)$ despite $P(u_t)$ is set to an estimate of true value, which we approximated empirically from 10K samples, indicating that influence sources and ALSHs are strongly coupled with one another. Despite being less accurate, agents trained on the F-IALS can also perform well on the true environment. Yet, they do not reach the same level of performance as those obtained with the GS and the IALS (see zoom in version in Figure \ref{fig:warehouse_curves}). Even though, the basic strategy on how to collect items can be learned from the F-IALS, the simulator does not provide the extra level of detail needed to learn better policies. That is, a consistent pattern that is present in both the GS and the IALS by which items that have been active the longest are more likely to disappear next. With this, agents can learn to not go for an item when the chances that a neighbor robot will get there first are high.
% We posit that the above is the main reason for the performance gap between the F-IALS and the IALS.
% is almost X as fast as the IALS (bottom bar chart), it does not provide the appropriate type of experience for the agents to perform well on the GS (green curve). 
% EXPLAIN THAT OBSERVATIONS AND INFLUENCE SOURCES ARE MORE STRONGLY CONNECTED THAN IN THE TRAFFIC ENVIRONMENT AND THAT THIS MAKES THE FPS LESS EFFECTIVE.
\subsection{Finite memory agents and AIP history dependence}\label{sec:exp_history_dependence}
Here we investigate whether our theoretical result from Theorem \ref{thm:history_dependence} also holds in practice. We want to show that when the agent's memory is finite, meaning it can only access memories from $k$ timesteps in the past, an influence predictor which conditions on the same history length is sufficient. We test this on the warehouse domain. To make the need for memory more evident, we modify the environment so that items always disappear from the robot's region after exactly 8 timesteps.
% This implies that an influence predictor that conditions on the last 8 timesteps can attain $100\%$ accuracy on predicting when items will disappear (see bottom right histogram in Figure \ref{fig:history_dependence}). 
% In this section we investigate the memory needs of the influence predictor both when agents are provided with memory and when they can only condition on the current observation to make decisions.
% Although we already showed in Section \ref{sec:history_dependence} that the history dependencies of the influence predictor are often determined by the agent's memory capacity, we now investigate this phenomenon empirically on the warehouse domain. 
% This implies that an influence predictor that conditions on the last 8 timesteps can attain $100\%$ accuracy on predicting when items will disappear (see bottom right histogram in Figure \ref{fig:history_dependence}). 
We first train, AIPs with and without memory. We call the resulting simulators M-IALS and NM-IALS respectively. A histogram showing for how many timesteps items are active before disappearing under each of the simulators is shown in Figure \ref{fig:history_dependence}. As expected, while the former can reach an accuracy of $100\%$ (items always remain for 8 timesteps with the M-IALS), the spectrum is much wider for the NM-IALS. This is because the latter can only estimate the marginal distribution $P^\pi(u_t|o_t)$. Then, we train agents with (M) and without memory (NM) on the M-IALS and the NM-IALS. The results for all four combinations are shown in Figure \ref{fig:history_dependence}. As indicated by the theory, the plot shows that when agents have no memory AIPs may condition only on the current observation (red). In such cases, the extra level of detail that a more accurate history-dependent AIP can provide is wasted (green). In contrast, when agents can distinguish observations from one another based on their memories from the past, AIPs that make predictions by looking at the history are fundamental. This is evidenced by the gap between the blue and orange curves.
% It is worth to point out that, even though the memory agent performs worse when trained on the NM-IALS (orange) than when trained on the M-IALS (blue), it can still outperform the agents with no memory (red and green). We posit that this is because, with the NM-IALS items still disappear after 8 timesteps \emph{on average} (see dashed vertical line in the bottom left histogram in Figure \ref{fig:history_dependence}), which allows the memory agent to learn to not go for item if this has been active for a long time. This once again suggests that, in some cases, inaccurate influence predictors might still provide good enough experiences to learn from.
\begin{figure}[tb]
\vskip -0.2in
\begin{center}
\centerline{\includegraphics[width=\columnwidth]{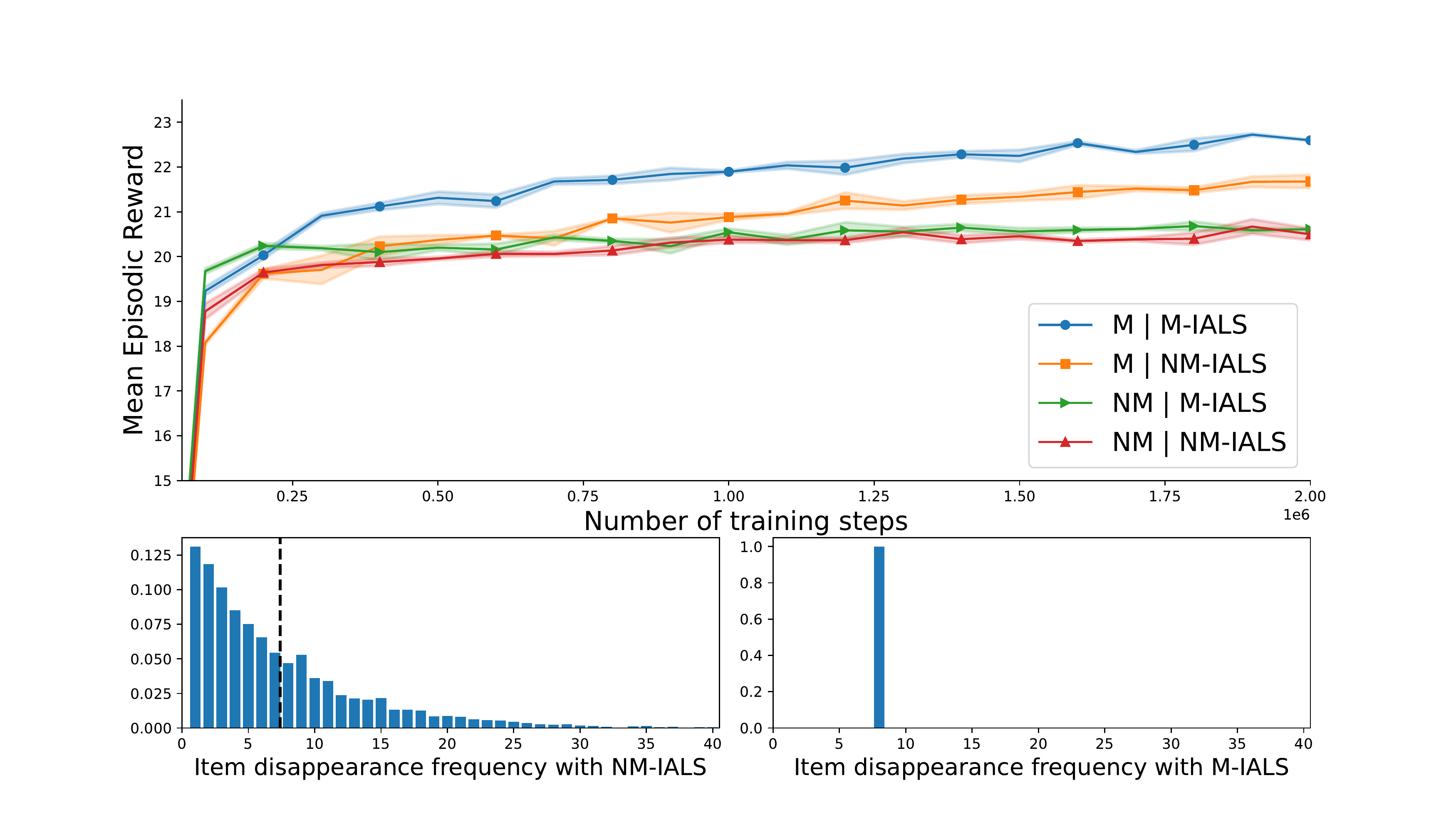}}
\vskip -0.2in
\caption{\textbf{Top:} Learning curves of agents with (M) and without memory (NM) trained on M-IALS and NM-IALS. \textbf{Bottom:} item disappearance frequencies with NM-IALS and with M-IALS.}
\label{fig:history_dependence}
\end{center}
\vskip -0.4in
\end{figure}
% \paragraph{influence predictor accuracy and transfer learning}
% \paragraph{Implicit memory} As explained in section \ref{sec:implicity_memory}, the combination of the agent's observations with the influence predictor internal memory should give the agent sufficient information to compute values and action probabilities without having to keep track of the past history. Figure \ref{fig:aug_obs} shows that indeed a memoryless agent is able to reach the same level of performance as an agent that can remember past observations. As reported by the bar chart at the bottom, this solution has the benefit of being computationally more efficient since it eliminates the need of having to train recurrent policies. On the downside it also implies that the influence predictor must be operational when running the agent on the true environment, which may introduce a slight overhead. For reference, Figure \ref{fig:aug_obs} also shows the performance of an agent that does not use any form of memory.
% \subsection{Technical considerations}
% Vehicle entering speed, road jam...
\section{Conclusion}\label{sec:conclusion}
This paper has offered a practical solution to allow the application of Deep RL methods to large systems, where performing exhaustive simulations can not be afforded. We focused on domains where, although the agent only interacts with a local portion of the environment, it is influenced by the global dynamics. The main idea was to replace the computationally inefficient global simulator by a lightweight version that only models the agent's local problem. However, as we showed in our experiments, directly doing this sometimes translates in a distribution shift on the agent's experience that yields poor performing policies. A good simulator needs to account for the interactions between the local region and the global dynamics. The results of our experiments suggested that by combining a pretrained influence predictor with the local simulator, we could speed up the learning process considerably while matching the performance of agents trained on the global simulator. Moreover, we analyzed the consequences of training influence predictors on data distributions that are different from those the predictor sees when deployed and resolved that, when possible, the human designer should remove from the input those variables that are irrelevant for predicting the influence sources.  
% Moreover, we also discovered that, in certain situations, inaccurate influence predictors can still provide a fair amount of useful experiences for the agent to perform well on the true environment. However, in domains with complicated dynamics, such as our warehouse environment, the best policies can only be obtained when simulations provide the extra level of detail that only accurate influence predictors are able to deliver. 
Finally, in line with our theoretical results in Section \ref{sec:history_dependence}, our last experiment revealed that the agent's memory capacity limits the memory needs of the influence predictor. 
\section*{Acknowledgements}
\newlength{\tmplength}
\setlength{\tmplength}{\columnsep}
%\setlength{\columnsep}{7pt}%
%F.A.O.\ is funded by EPSRC First Grant EP/R001227/1.  
This project received funding from the European Research
\begin{wrapfigure}{r}{0.3\columnwidth}
    \vspace{-8pt}
    \hspace{-14pt}
    \includegraphics[width=0.35\columnwidth]{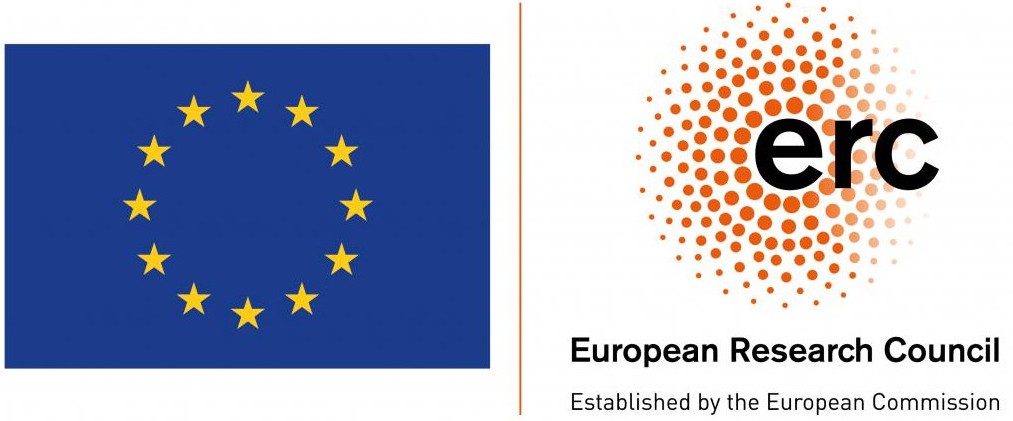}
\end{wrapfigure}Council (ERC) 
under the European Union's Horizon 2020  research 
and innovation programme (grant agreement No.~758824 \textemdash INFLUENCE).
\setlength{\columnsep}{\tmplength}

\bibliography{bibliography}
\bibliographystyle{icml2022}

\appendix
\onecolumn

\section{Proofs}\label{ap:proofs}
\renewcommand\qedsymbol{$\square$}

\history*
\begin{proof}
% We will prove the above for the case when the agent's policy is computed from the action-value function $Q$ (e.g. the agent chooses the action $a_t$ with the highest value for a given $h_{0:k}$). 
% but the result extends to methods that optimize the policy directly \cite{SuttonBarto98,Peters11policygradient}. 
% A myopic agent conditions its policy on a truncated action-observation history $h^t_{t-k}$ of length $k$. 
We will prove that, when conditioning on the last $k$ elements of the AOH, $h_{-k:t}$, the action-value function $\bar{Q}^{\bar{\pi}}(h_{-k:t},a_t)$, can be expressed in terms of an influence predictor that conditions on $l_{-k:t}$, $\bar{I}(u_t|l_{-k:t})$.  If this is true, then $Q$ estimates can be obtained by sampling from the finite memory IALS (with influence predictor $\bar{I}(u_t|l_{-k:t})$) and the agent's optimal policy (in the class of finite memory policies of the form $\bar{\pi}(a_t|h_{-k:t})$) can be found via policy iteration \cite{SuttonBarto98}. Note that, finite memory policies have been shown to perform arbitrarily bad compared to their full memory counterparts \cite{Singh94ICML}. Moreover, empirical results have revealed that standard RL methods for MDPs may fail when state representations are not Markovian \cite{Littman94memoryless}. Nonetheless, here we are only concerned with whether or not we can find the same finite memory policies when using influence predictors that condition only on $h_{-k:t}$ as those found with influence predictors that condition on the full AOH. That is, independently of the effectiveness of the RL method we use or the absolute performance of the policies we find.

% Given the full AOH, and assuming that all $(l_t,a_t)$ pairs are visited infinitely often, the action-value estimates $Q^\pi(l_t,a_t)$ should converge to a fixed point satisfying the Bellman equation, 
Given the full AOH, the action-value function $Q^\pi(h_t,a_t)$ of a local-POMDP satisfies the Bellman equation and can be expressed as
\begin{equation}
    Q^\pi(h_t,a_t) =  \sum_{x_t}P(x_t|h_t)\dot{R}(x_t,a_t)+
    \sum_{l_t}P(l_t|h_t) \sum_{x_{t+1}} P(o_{t+1}| l_t, a_t) \sum_{a_{t+1}}\pi(a_{t+1}|h_{t+1})Q^\pi(\langle h_t, a_t, o_{t+1} \rangle,a_{t+1}),
    \label{eq:qpomdp}
\end{equation}
where and $P(x_t|h_t)$ is the probability of the local states $x_t$ given the previous AOH, and from \eqref{eq:IALM}
\begin{equation}
    P(o_{t+1}|l_t, a_t) = \sum_{x_{t+1}} \dot{O}(o_{t+1}|x_{t+1}) \sum_{u_t}   \dot{T}(x_{t+1}|x_t, a_t, u_t)I(u_t|l_t)
    \label{eq:IALM_thm1}
\end{equation}
If, instead, the agent's policy $\bar{\pi}$, and in turn the $\bar{Q}^{\bar{\pi}}$ function, can condition only on the last $k$ elements in the AOH $h_{-k:t}$, we have 
\begin{equation*}
    \bar{Q}^{\bar{\pi}}(h_{-k:t}, a_t) = \sum_{h_{0:k}}P^{\bar{\pi}}({h_{0:k}|h_{-k:t}}) Q^{\bar{\pi}}(h_t = \langle h_{0:k}, h_{-k:t} \rangle,a_t),
\end{equation*}
where $h_{0:k}$ is the AOH from $0$ to $k$ and the above expression is just the expected Q value  given $h_{-k:t}$. Then, using \eqref{eq:qpomdp}
\begin{align*}
    \bar{Q}^{\bar{\pi}}(h_{-k:t}, a_t) &= \sum_{h_{0:k}}P^{\bar{\pi}}(h_{0:k}| h_{-k:t})\sum_{x_t}P(x_t| h_{0:k}, h_{-k:t})\dot{R}(x_t,a_t) \\ +  
    &\sum_{h_{0:k}}P^{\bar{\pi}}(h_{0:k}|h_{-k:t})\sum_{l_t}P(l_t|h_{0:k}, h_{-k:t}) \sum_{o_{t+1}} P(o_{t+1}| l_t,a_t)\sum_{a_{t+1}}\bar{\pi}(a_{t+1}|h_{-k+1:t+1})\bar{Q}^{\bar{\pi}}(h_{-k+1:t+1},a_{t+1})
\end{align*}
and from \eqref{eq:IALM_thm1} we have
\begin{align*}
     \sum_{h_{0:k}}P^{\bar{\pi}}(h_{0:k}| h_{-k:t})\sum_{l_t}P(l_t|h_{0:k}, h_{-k:t}) \sum_{o_{t+1}} P(o_{t+1}| &l_t,a_t) \\ =  
    \sum_{o_{t+1}}\sum_{x_{t+1}} \dot{O}(o_{t+1}|x_{t+1}) \sum_{u_t} &\sum_{l_t} \dot{T}(x_{t+1}|x_t, a_t, u_t)I(u_t|l_t)\sum_{h_{0:k}}P^{\bar{\pi}}(h_{0:k}| h_{-k:t})P(l_t|h_{0:k}, h_{-k:t}).
\end{align*}
Then, using the rule of conditional probability, the last summation can be simplified as
\begin{align*}
    \sum_{h_{0:k}}P^{\bar{\pi}}(h_{0:k}| h_{-k:t})P(l_t|h_{0:k}, h_{-k:t}) = \sum_{h_{0:k}}P^{\bar{\pi}}(l_t,h_{0:k}|h_{-k:t}) 
    = P^{\bar{\pi}}(l_t|h_{-k:t})
\end{align*}
and thus
\begin{align*}
   \sum_{l_t}\dot{T}(x_{t+1}|x_t, a_t, u_t)I(u_t|l_t)P^{\bar{\pi}}(l_t|h_{-k:t}) &= \sum_{l_{0:k},l_{-k:t}}\dot{T}(x_{t+1}|x_t, a_t, u_t)I(u_t|l_{0:k},l_{-k:t})P^{\bar{\pi}}(l_{0:k},l_{-k:t}|h_{-k:t}) \\
   &= \sum_{l_{0:k},l_{-k:t}}\dot{T}(x_{t+1}|x_t, a_t, u_t)I(u_t|l_{0:k},l_{-k:t})P^{\bar{\pi}}(l_{0:k}|l_{-k:t})P^{\bar{\pi}}(l_{-k:t}| h_{-k:t}) \\
    \text{since } P^{\bar{\pi}}&(l_{0:k}|l_{-k:t}, h_{-k:t}) = P^{\bar{\pi}}(l_{0:k}|l_{-k:t}) \text{ because $l_{-k:t}$ is the only parent of $h_{-k:t}$} \\
%   &= \sum_{l_{0:k},l_{-k:t}}\dot{T}(x_{t+1}|x_t, a_t, u_t)\frac{P(u_t,l_{0:k},l_{-k:t})}{P(l_{0:k},l_{-k:t})}\frac{P(l_{0:k},l_{-k:t})}{P^{\bar{\pi}}(l_{-k:t})}P^{\bar{\pi}}(l_{-k:t}| h_{-k:t}) \\
   &= \sum_{l_{-k:t}} \dot{T}(x_{t+1}|x_t, a_t, u_t)\sum_{l_{0:k}}P^{\bar{\pi}}(u_t,l_{0:k}|l_{-k:t})P^{\bar{\pi}}(l_{-k:t}| h_{-k:t}) \\
   &= \sum_{l_{-k:t}}\dot{T}(x_{t+1}|x_t, a_t, u_t) \bar{I}^{\bar{\pi}}(u_t|l_{-k:t})P^{\bar{\pi}}(l_{-k:t}| h_{-k:t})
\end{align*}

Hence, putting all together,

\begin{equation*}
\small
\begin{split}
    \bar{Q}^{\bar{\pi}}&(h_{-k:t}, a_t) = \sum_{h_{0:k}}P^{\bar{\pi}}(h_{0:k}| h_{-k:t})\sum_{x_t}P(x_t| h_{0:k}, h_{-k:t})\dot{R}(x_t,a_t) \\ +&  
    \sum_{o_{t+1}}\sum_{x_{t+1}} \dot{O}(o_{t+1}|x_{t+1}) \sum_{u_t}\sum_{l_{-k:t}}\dot{T}(x_{t+1}|x_t, a_t, u_t) \bar{I}^{\bar{\pi}}(u_t|l_{-k:t})P^{\bar{\pi}}(l_{-k:t}| h_{-k:t})\sum_{a_{t+1}}\bar{\pi}(a_{t+1}|h_{-k+1:t+1})\bar{Q}^{\bar{\pi}}(h_{-k+1:t+1},a_{t+1})
\end{split}
\end{equation*}

Intuitively, we see that, because the action-value function $\bar{Q}^{\bar{\pi}}(a_t, h_{-k:t})$ is just an expectation over all possible $h_t$ given $h_{-k:t}$, the distribution of $u_t$ given the full ALSH $l_t$, $I(u_t|l_t)$, is effectively ``washed out'' by the same expectation. Hence, we may as well condition the influence predictor directly on $l_{-k:t}$, $\bar{I}^{\bar{\pi}}(u_t|l_{-k:t})$. 

Finally, using the assumption $P(u_t|l_{-k:t}, a_{0:k}) = P(u_t|l_{-k:t})$ we can drop the superscript $\bar{\pi}$ from $\bar{I}^{\bar{\pi}}(u_t|l_{-k:t})$. This assumption is important because the distribution $\bar{I}(u_t|l_{-k:t})$ is generally not well-defined. Hence, if we estimate $\bar{I}^{{\bar{\pi}}_0}(u_t|l_{-k:t})$ from samples collected while following $\bar{\pi}_0$ but $P(u_t|l_{-k:t}, a_{0:k}) \neq P(u_t|l_{-k:t})$, then, when we update the policy, the marginal distribution $P^{\bar{\pi}}(a_{0:k})$ will change and the old estimates will be biased.

\end{proof}

\invariant*
\begin{figure}[h]
% \vskip -0.1in
\begin{center}
\centerline{\includegraphics[width=.45\textwidth]{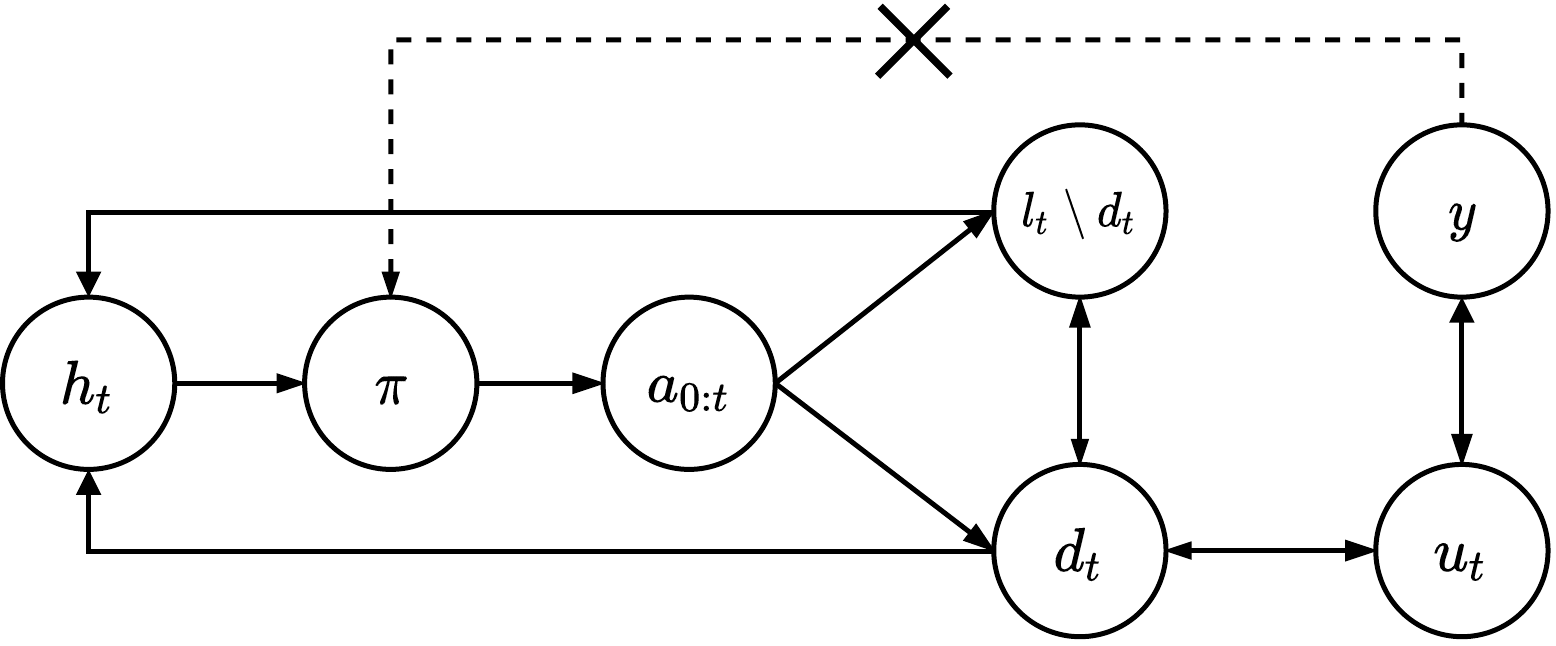}}
\caption{Graphical causal model of a local POMDP. The bidirectional arrows between two sets of variables (e.g. between $d_t$ and $u_t$) indicate that there can be variables in one set that are affected by other variables in the other set and vice-versa. As depicted by the two diagonal arrows in the middle, actions $a_{0:t}$ may or may not be included in $d_t$. The top dashed arrow connecting $y$ and $\pi$ indicates that $\pi$ cannot be a function of the non-local variables.}
\label{fig:diagram_theorem2}
\end{center}
\end{figure}
\begin{proof}
In this proof, we will make use of the graphical causal model in Figure \ref{fig:diagram_theorem2}. We know that the agent's policy $\pi$ can only influence the environment through its actions $a_{0:t}$. Moreover, from the definition of d-set (Definition \ref{def:dset}), we know that the \emph{direct path} \cite{Pearl2000Causality} that connects the past sequence of actions $a_{0:t}$ with $u_t$ must go through $d_t$. Otherwise, $(u_t \not\!\perp\!\!\!\perp d_t \setminus l_t| d_t)$ and $d_t$ would not constitute a d-set. Note that, as shown by the diagonal arrows in the middle, actions in $a_{0:t}$ may or may not be included in $d_t$. Finally, the second condition in Theorem \ref{thm:invariant} implies that there cannot be \emph{backdoor paths} \cite{Pearl2000Causality} between $\pi$ and $u_t$. Hence, if we control for $d_t$, $\pi$ and $u_t$ become conditionally independent,
\begin{equation*}
 P(u_t|d_t) = P^{\pi}(u_t| d_t) \quad \text{for all} \quad \pi \in \Pi.    
\end{equation*}
Moreover, for the above to hold, the d-set does not necessarily need to be minimal, $d_t \subseteq d^*_t$. That is, as long as the path between $l_t \setminus d_t$  and $u_t$ remains closed, $d_t$ may include as many additional (irrelevant) variables as there are in $l_t$. Note that this path is only  open when $d_t$ is not a d-set. 
\end{proof}
% EXPLAIN WHY THIS IS DIFFERENT FROM OTHER SETTINGS WHERE YOU CAN ONLY GUARANTEE INVARIANCE ACROSS ENVIRONMENTS IF THE CONTROL SET IS THE CAUSAL SET. IS THIS TRUE?? EXPLAIN THE TYPE OF INTERVENTIONS THAT THE POLICY CREATES.
\dd*
\begin{proof}
\begin{align*}
 KL(P^{\pi_0}(d_t, u_t) || P^{\pi}(d_t, u_t)) &=  \sum_{d_t,u_t} P^{\pi_0}(d_t, u_t) \log \left( \frac{P^{\pi_0}(d_t, u_t)}{P^{\pi}(d_t, u_t)}\right)\\
 &=\sum_{d_t,u_t} P^{\pi_0}(u_t, d_t) \log \left(\frac{P^{\pi_0}(u_t|d_t)P^{\pi_0}(d_t)}{P^{\pi}(u_t|d_t)P^{\pi}(d_t)}\right) \\
 &= \sum_{d_t} \log \left( \frac{P^{\pi_0}(d_t)}{P^{\pi}(d_t)}\right) \sum_{u_t} P^{\pi_0}(u_t, d_t) \\
 & \text{since, from Theorem \ref{thm:invariant}, we know that } P(u_t|d_t) \text{ is invariant across all } \pi \in \Pi \\
 &= KL(P^{\pi_0}(d_t) || P^{\pi}(d_t))
\end{align*} 
Similarly, because $P(u_t|l_t)$ is also invariant across all $\pi \in \Pi$, $P^{\pi_0}(u_t|l_t) = P^\pi(u_t|l_t)$,
\begin{equation*}
   KL(P^{\pi_0}(l_t, u_t) || P^{\pi}(l_t, u_t)) = KL(P^{\pi_0}(l_t) || P^{\pi}(l_t)).
\end{equation*}
Then, since $d_t \subseteq l_t$, we can write
\begin{equation*}
    P^\pi(l_t) = P^\pi(d_t,  l_t \setminus d_t)
\end{equation*}
and, using the chain rule
\begin{align*}
   KL(P^{\pi_0}(l_t) || P^{\pi}(l_t)) &= KL(P^{\pi_0}(l_t \setminus d_t |d_t) || P^{\pi}(l_t \setminus d_t| d_t)) + KL(P^{\pi_0}(d_t) || P^{\pi}(d_t)) \\
   & \geq  KL(P^{\pi_0}(d_t) || P^{\pi}(d_t))
\end{align*}
where the last inequality holds because the KL divergence is always non-negative.
% \begin{equation*}
%      KL(P^{\pi_0}(h^s_t, u_t) || P^{\pi}(h^s_t, u_t)) = KL(P^{\pi_0}(u_t |h^s_t) || P^{\pi}(u_t | h^s_t)) + KL(P^{\pi_0}(h^s_t) || P^{\pi}(h^s_t))
% \end{equation*}
% \begin{equation*}
% P^{\pi_0}(d_t) \log \left( \frac{P^{\pi_0}(d_t)}{P^{\pi}(d_t)}\right)
%   \leq P^{\pi_0}(l_t) \log \left(\frac{P^{\pi_0}(l_t)}{P^{\pi}(l_t)}\right)  \quad \text{for all pairs} \quad l_t, d_t 
% \end{equation*}
% which follows from the fact that the KL-divergence can only increase when we add more dimensions.
\end{proof}
\equivalent*
\begin{proof}

Follows from Proposition \ref{prop:distance}. We know that $\pi$ can only influence the environment through the actions $a_{0:t}$. Hence, 
% \begin{equation}
%      KL(P^{\pi_1}(d_t, u_t) || P^{\pi_2}(d_t, u_t)) = KL(P^{\pi_1}(d_t) || P^{\pi_2}(d_t)) =  P^{\pi_1}(d_t)\sum_{d_t} \log \left( \frac{P^{\pi_1}(d_t)}{P^{\pi_2}(d_t)}\right) = 0
% \end{equation}
% and 
\begin{equation*}
\small
\begin{split}
    \forall \pi_1 \neq \pi_2,d_t,a_{0:t} \quad P(d_t|do(a_{0:t})) = P(d_t) \Rightarrow P^{\pi_1}(d_t) = P^{\pi_2}(d_t) \iff KL(P^{\pi_1}(d_t) || P^{\pi_2}(d_t)) = 0
\end{split}
\end{equation*}
and from the proof of Proposition \ref{prop:distance}
\begin{equation*}
\small
    \begin{split}
     KL(P^{\pi_1}(d_t, u_t) || P^{\pi_2}(d_t, u_t)) = KL(P^{\pi_1}(d_t) || P^{\pi_2}(d_t))   
    \end{split}
\end{equation*}
which means
\begin{equation*}
\small
\begin{split}
    \forall \pi_1 \neq \pi_2,d_t,a_{0:t},u_t \quad  P(d_t|do(a_{0:t})) = P(d_t) \Rightarrow KL(P^{\pi_1}(d_t, u_t) || P^{\pi_2}(d_t,u_t)) = 0 \iff  P^{\pi_1}(d_t, u_t) = P^{\pi_2}(d_t, u_t) 
\end{split}
\end{equation*}
On the other hand, because $l_t \subseteq a_{0:t}$ then
\begin{equation*}
\small
\begin{split}
     \forall \pi_1 \neq \pi_2, l_t,a_{0:t},u_t \quad  KL(P^{\pi_1}(l_t) || P^{\pi_2}(l_t)) > 0 \iff KL(P^{\pi_1}(l_t,u_t) || P^{\pi_2}(l_t,u_t)) > 0 \iff P^{\pi_1}(l_t, u_t) \neq P^{\pi_2}(l_t, u_t) 
\end{split}
\end{equation*}

\end{proof}
\section{Example: Spurious Correlations in the Traffic Domain}\label{ap:example} 
Figure \ref{fig:spurious_corr} shows four screenshots taken from the SUMO simulator \cite{SUMO2018}. These capture a sequence consecutive states in the traffic domain (see Section \ref{sec:traffic} for more details about this environment). The agent's local region is depicted by the red dashed box. The small cyan box on the right of every screenshot indicates the location of an influence source. As shown in the screenshots, when traffic is dense, lines of cars are formed along the incoming lanes in front of the traffic lights. This situation leads to the appearance of spurious correlations between the traffic lights and the influence sources. In particular, Figure \ref{fig:spurious_corr} reveals that three timesteps after the traffic lights on the horizontal lanes are switched to green a new car appears at the influence source. Although there is clearly no direct relationship between these two events, if this pattern occurs often enough, as it is the case under a uniform random policy, the AIP might pick it up and use it as a shortcut. That is, the AIP might learn to predict that a car will show at the influence source exactly three timesteps after the lights are switched two green. This would indeed be effective at the beginning of training, since traffic jams are common under poor performing policies, but may result in highly inaccurate predictions when policies are further improved. As explained in Section \ref{sec:policy_dependence} the problem above can be prevented if d-sets, rather than full AOHs, are fed into the AIP.
\begin{figure}[ht]
% \vskip -0.1in
\begin{center}
\centerline{\includegraphics[width=\textwidth]{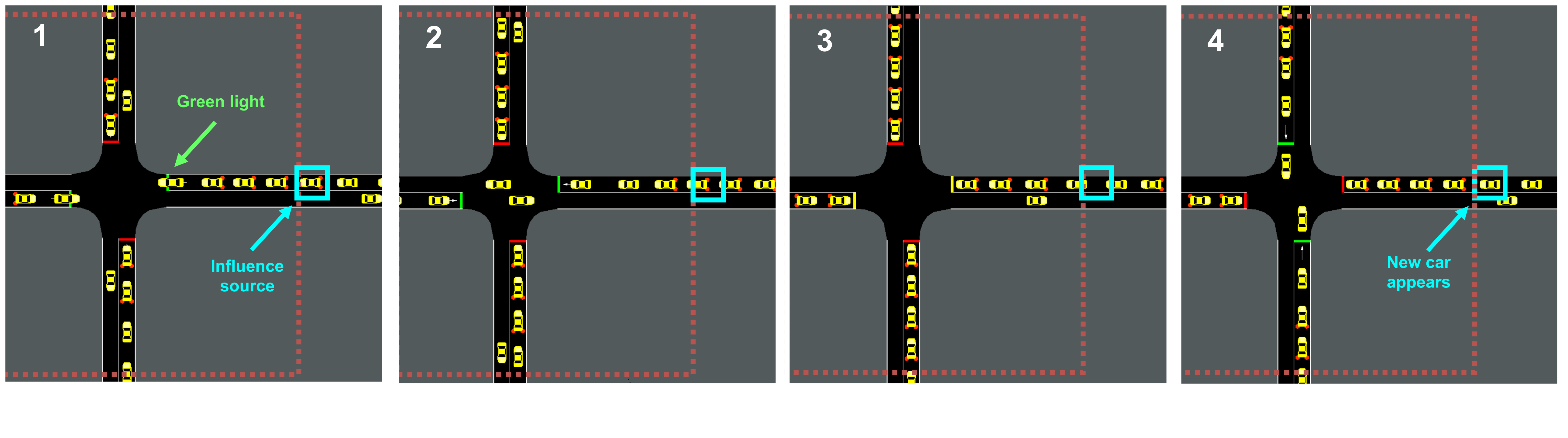}}
\caption{Four screenshots capturing a sequence of states that may occur under a uniform random policy. The agent's local region is depicted by the red dashed box. The small cyan box on the right of every screenshot indicates the location of an influence source. The screenshots reveal that, when the traffic is dense, three timesteps after the traffic lights on the horizontal lanes are switched to green a new car appears at the influence source. This is clearly a spurious correlation as there is not direct relationship between the traffic lights and the influence sources.}
\label{fig:spurious_corr}
\end{center}
\end{figure}
\section{Local Simulators}\label{ap:local_sim}
\begin{figure}[h!]
\label{fig:results}
\centering
  \begin{subfigure}[b]{0.5\textwidth}
   \hspace{30mm}
    \includegraphics[width=0.5\textwidth]{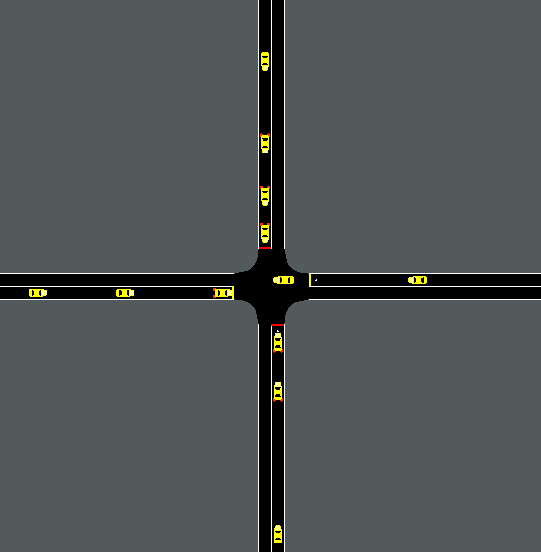}
    \label{fig:breakout}
  \end{subfigure}
  \begin{subfigure}[b]{0.45\textwidth}
    \includegraphics[width=0.58\textwidth]{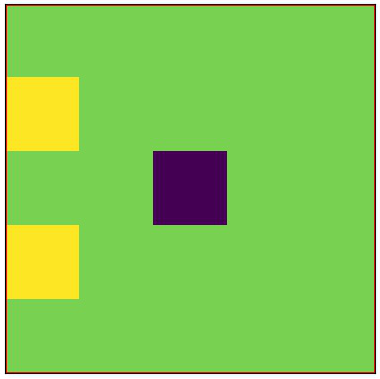}
    \label{fig:pong}
  \end{subfigure}
% \caption{Average score per episode as a function of training time. Shaded areas show the standard deviation of the mean.}
\caption{A screenshot of the local simulators for the  traffic control (left) and warehouse (right) environments} 
\label{fig:local_sim}
\vspace{-10pt}
\end{figure}

\section{Results: traffic control intersection 2}\label{ap:intersection2}

\begin{figure}[H]
% \vskip -0.1in
\begin{center}
\centerline{\includegraphics[width=.6\columnwidth]{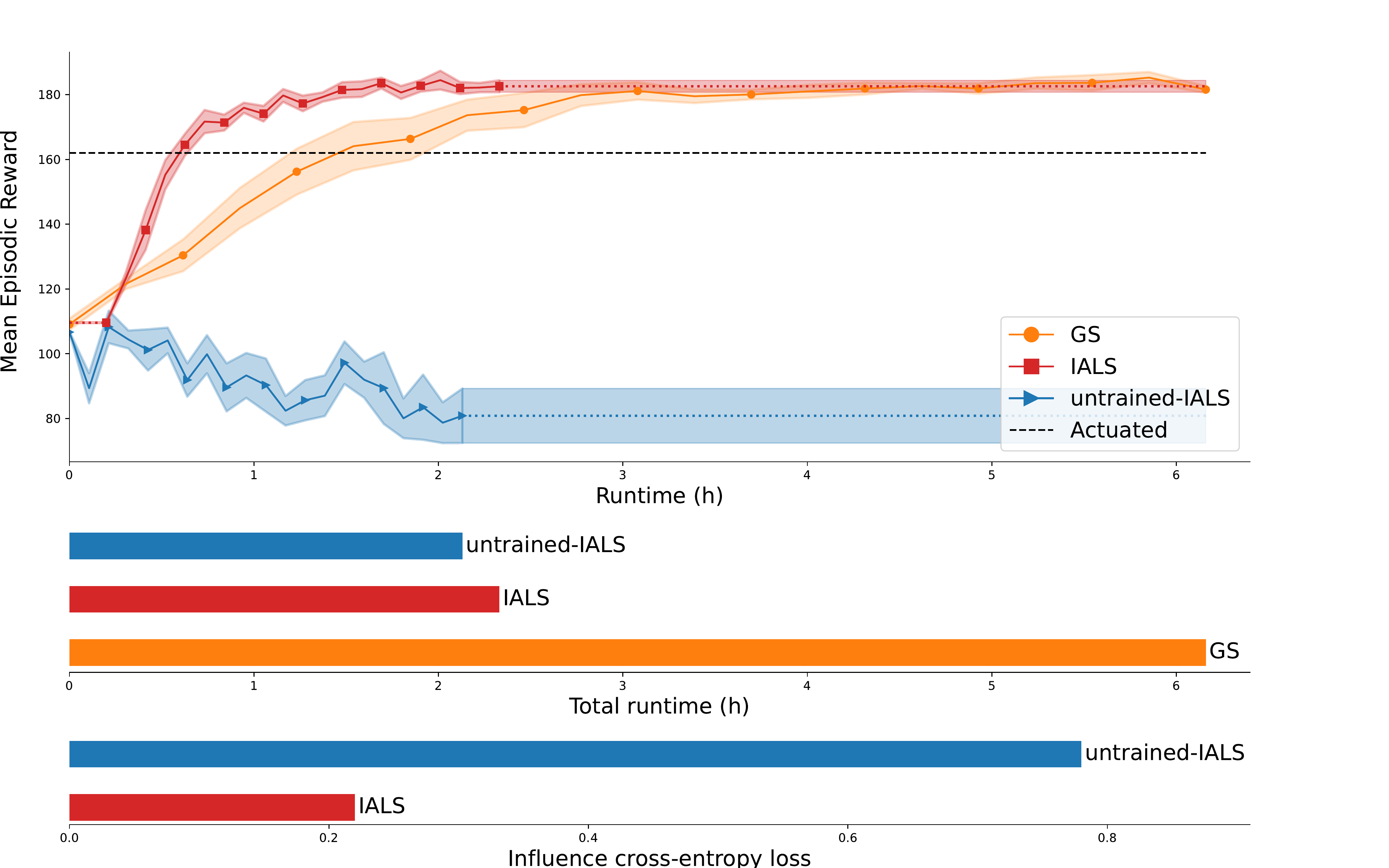}}
\end{center}
% \vskip -0.2in
\caption{\textbf{Top:} Learning curves of agents trained with the GS, the IALS and the untrained-IALS on intersection 2 (Figure \ref{fig:trafficgrid}) as a function of wall-clock time. 
The dotted horizontal lines show the final performance of the agents after 2M timesteps of training. The dotted horizontal line at the begining of the red curve corresponds to the AIP training time. 
\textbf{Middle:} Total runtime of training for 2M training steps on the three simulators. \textbf{Bottom:} Cross entropy loss for the trained and untrained AIPs.}
% \vskip -0.2in
\label{fig:traffic_curves2}
\end{figure}

\section{Results: sufficiently similar training conditions }\label{ap:similar_training_results}
Here we further explore our research question in Section \ref{sec:sufficiently_similar}. To what extent can agents trained on inaccurate IALS achieve similar performance to those trained on the GS? To shed some light on this this complicated question, we introduce a new type of AIP which represents the influence sources by a fixed marginal probability $P(u_t)$ independent of the ALSH. We call the resulting simulator F-IALS (F for fixed). 

\paragraph{Traffic results:}

Figure \ref{fig:traffic_curves} shows the performance of agents trained on the F-IALS with $P(u_t) = 0.1$ (F-IALS 0.1) and $P(u_t) = 0.5$ (F-IALS 0.5). The bar plot at the bottom of Figure \ref{fig:traffic_curves} shows that the cross-entropy loss is much higher when modeling the influence sources with the marginal distribution $P(u_t)$ (F-IALS 0.1 and F-IALS 0.5) than when modeling them with the learned influence predictor $P(u_t|l_t)$ (IALS). This suggests that there is a non-trivial relationship between ALSH and influence sources, and in principle
\begin{equation}
    KL(I(u_t|d_t) || \hat{I}_\theta(u_t| d_t)) < KL(I(u_t|d_t) || P(u_t) = 0.1) < KL(I(u_t|d_t) || P(u_t) = 0.5)
\end{equation}
Nonetheless, agents trained on the F-IALS 0.1 reach the same (intersection 1) or similar (intersection 2) level of performance as those trained on the IALS. This is in line with our hypothesis in section \ref{sec:sufficiently_similar}. If real trajectories are somewhat likely under a random influence predictor, an agent with sufficient capacity will be able to learn from them and perform well on the true environment. In fact, given that the probability used for the inflow of vehicles entering the GS is also 0.1, the chances of generating traffic patterns with the F-IALS (0.1) similar to those of the GS are very high. On the other hand, when setting the probability of cars entering the LS to a less sensible $P(u_t) = 0.5$ agents trained on the F-IALS (0.5) can no longer match the same performance level. 
\begin{figure}[H]
% \vskip -0.1in
\begin{center}
\centerline{\includegraphics[width=.6\columnwidth]{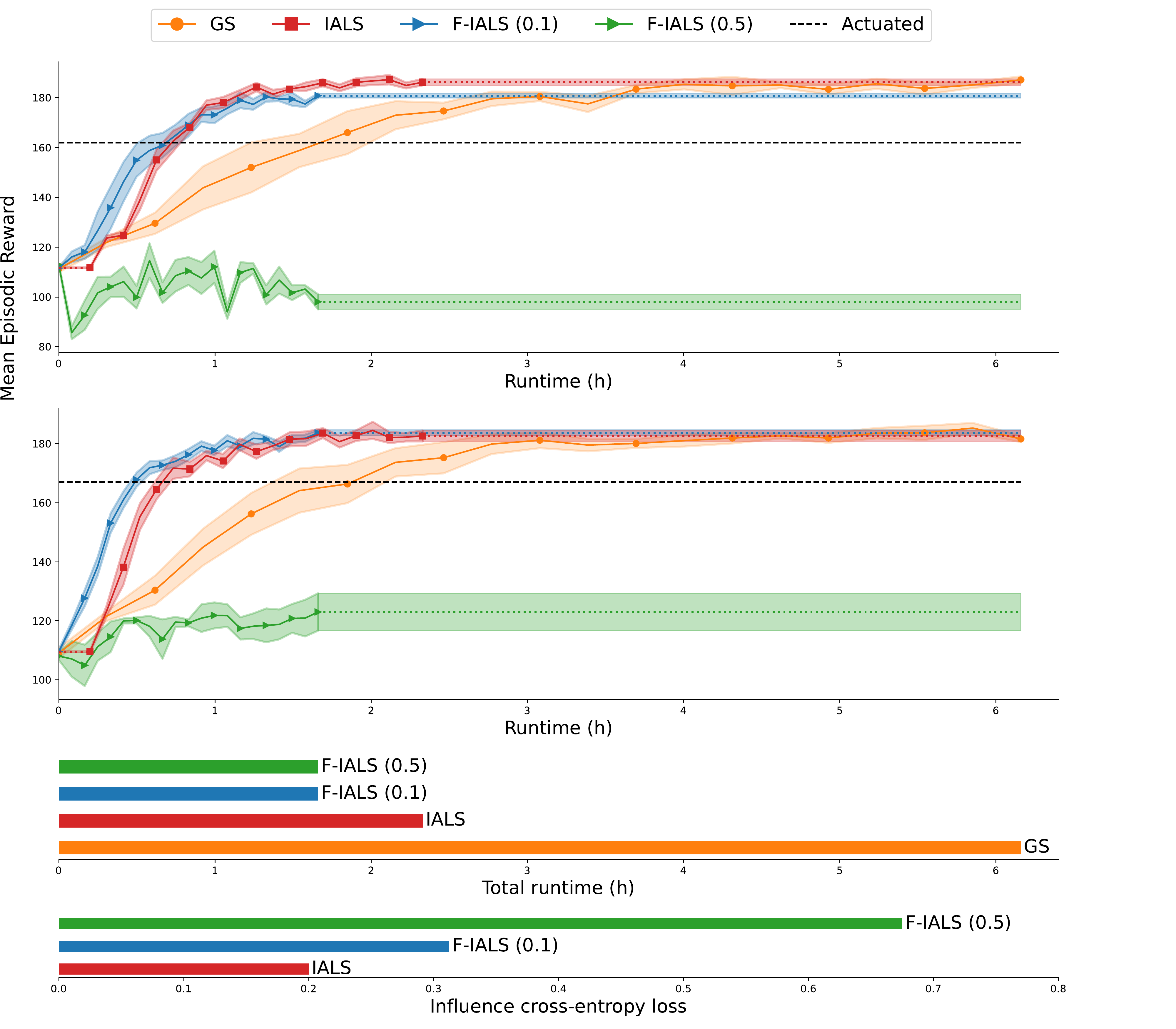}}
\end{center}
% \vskip -0.2in
\caption{\textbf{Top:} Learning curves of agents trained with the GS, the IALS and the F-IALS on the the two intersections highlighted in Figure \ref{fig:trafficgrid} as a function of wall-clock time. The dotted horizontal lines show the final performance of the agents after 2M timesteps of training. The short horizontal line at the beginning of the red curve corresponds to the AIP training time. \textbf{Second from the bottom:} Total runtime of training for 2M training steps on the three simulators. \textbf{Bottom:} Cross entropy loss evaluated at intersection 1 for the three AIPs (values are very similar for intersection 2).}
% \vskip -0.2in
\label{fig:traffic_sufficient}
\end{figure}
\begin{figure}[H]
% \vskip -0.1in
\begin{center}
\centerline{\includegraphics[width=.6\columnwidth]{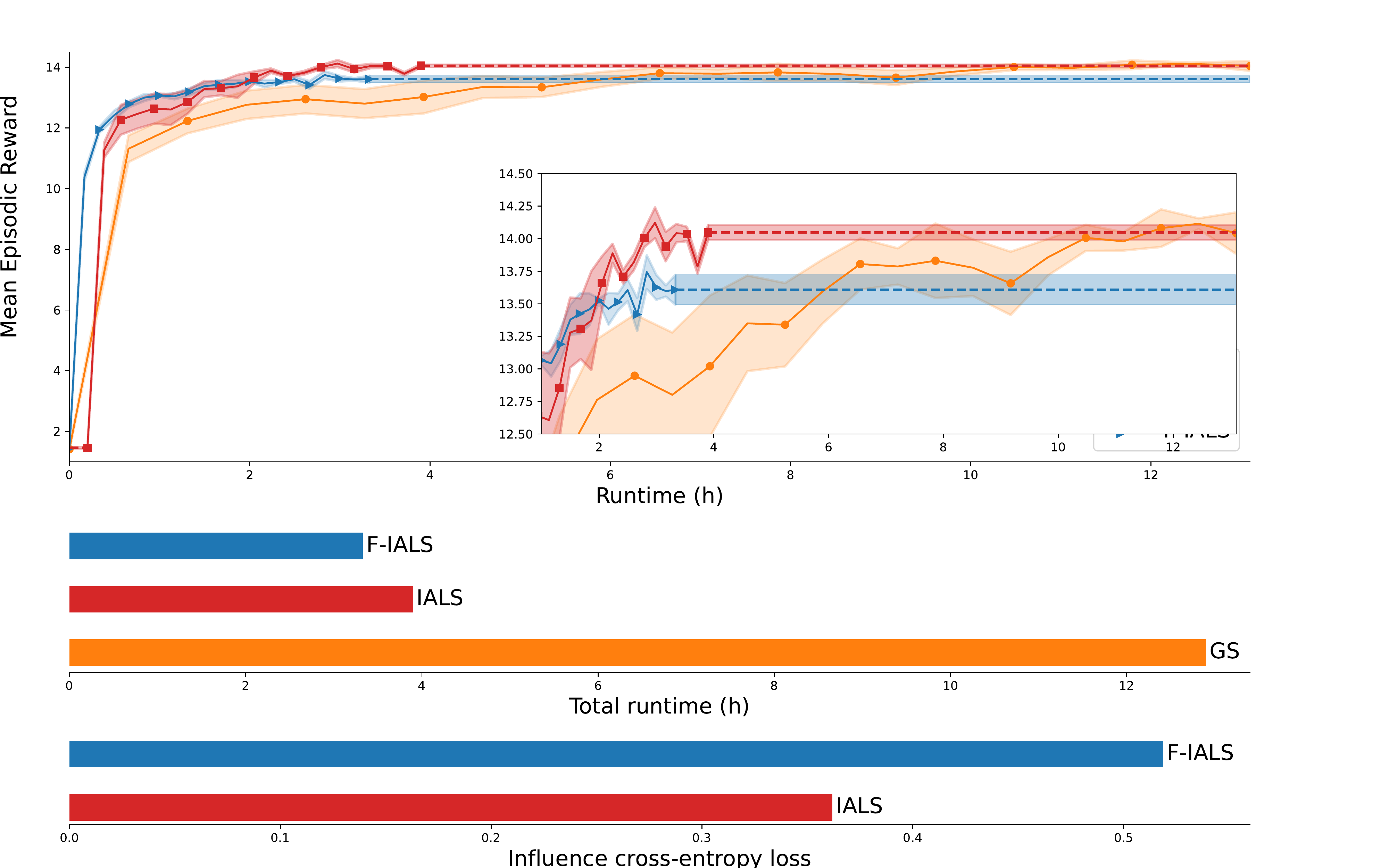}}
\end{center}
% \vskip -0.2in
\caption{\textbf{Top:} Learning curves of agents trained with the GS, the IALS and the F-IALS on the the warehouse commissioning task as a function of wall-clock time. Zoom in version of the same chart showing the performance difference between IALS and F-IALS. The dotted horizontal lines show the final performance of the agents after 2M timesteps of training. The dotted horizontal line at the beginning of the red curve corresponds to the AIP training time. \textbf{Second from the bottom:} Total runtime of training for 2M steps on the three simulators. \textbf{Bottom:} Cross entropy loss for the two AIPs.}
% \vskip -0.2in
\label{fig:warehouse_sufficient}
\end{figure}

\paragraph{Warehouse results:}
In this environment the fixed influence source probabilities is set to an estimate $\hat{P}(u_t)$ of true value $P^{\pi_0}(u_t)$, which we approximated empirically from $N = 10K$ samples collected from the GS while following $\pi_0$. Even so, the cross-entropy of the F-IALS is again higher than that of the learned influence predictor $\hat{I}_\theta(u_t|l_t)$, indicating again that influence sources $u_t$ and ALSHs are strongly coupled with one another and that there is a non-trivial relationship between them,
\begin{equation}
    KL(I(u_t|d_t) || \hat{I}_\theta (u_t| d_t))  < KL (I(u_t|d_t) || \hat{P}(u_t))
\end{equation}
Despite being less accurate, agents trained on the F-IALS can also perform well on the true environment. Yet, they do not reach the same level of performance as those obtained with the GS and the IALS (see zoom in version in Figure \ref{fig:warehouse_curves}). Even though, the basic strategy on how to collect items can be learned from the F-IALS, the simulator does not provide the extra level of detail needed to learn better policies. That is, a consistent pattern that is present in both the GS and the IALS by which items that have been active the longest are more likely to disappear next. With this, agents can learn to not go for an item when the chances that a neighbor robot will get there first are high.

All in all we see that, in certain situations, inaccurate influence predictors can still provide a fair amount of useful experiences for the agent to perform well on the true environment. However, in domains with complicated dynamics, such as our warehouse environment, the best policies can only be obtained when simulations provide the extra level of detail that only accurate influence predictors are able to deliver.

\section{Implementation Details}\label{ap:imp_details}

\paragraph{Approximate influence predictor: } Due to the sequential nature of the problem, rather than feeding the full past history every time we make a prediction, we use a recurrent neural network (RNN) \citep{hochreiter1997long, Cho2014Learning} and  process observations one at a time,
\begin{equation}
\small
    P(u_t|l_t) \approx \hat{I}_\theta(u_t| \hat{h}_{t-1}, o_t) = F_{\text{rnn}}(\hat{h}_{t-1},o_t, u_t),
    \label{eq:RNN}
\end{equation}
where we use $\hat{h}$ to indicate that the history $h$ is embedded in the RNN's internal memory. 

Given that we generally have multiple influence sources $u_t = \langle u^1_t \dots u^M_t\rangle$, we need to fit $M$ separate models $\hat{I}_{\theta_m}$ to predict each of the $M$ influence sources. In practice, to reduce the computational cost, we can have a single network with a common representation module for all influence sources and output their probability distributions using $M$ separate heads. This representation assumes that the influence sources are independent of one another,
\begin{equation}
\small
    I(u_t|l_t) = \prod_{m=0}^M P(u^m_t|l_t),
\end{equation}
which is true for the two domains we study in this paper.

\paragraph{Practical implications of Theorem \ref{thm:history_dependence}:} One important consideration when training RNNs via backpropagation through time (BPTT) is to choose the right length for the input sequences \citep{williams1990efficient}. This determines the number of steps the network is backpropagated, and thus for how long past information can be retained. On the one hand, longer sequences will often provide more information to predict the influence sources, on the other, they will also make it harder to optimize the network. Ideally, we would like to choose just the right sequence length such that the agent cannot perceive any distribution shift in the local transitions. Assuming that in our environment $P(u_t|l_{-k:t},a_{0:k}) = P(u_t|l_{-k:t})$, from Theorem \ref{thm:history_dependence} we know that the sequence length should  be (at least) as long as that of the agent's (if these are also modelled by RNNs or as long as the number of stacked observations fed to feedforward neural networks; FNN). If $P(u_t|l_{-k:t},a_{0:k}) \neq P(u_t|l_{-k:t})$ we can still condition the AIP only on $l_{-k:t}$ but we might need to retrain the AIP every certain number of policy updates to prevent it from becoming stale (see last paragraph in the proof of theorem \ref{thm:history_dependence}; Appendix \ref{ap:proofs}).

\paragraph{Policies:} We model policies by FNNs. In the warehouse environment, since the agent needs memory to perform well, policies are fed with a stack of the last 8 observations. This architecture performed better than GRUs. In the traffic control task, on the other hand, policies are fed only with the current observation as this  seems to be sufficient to predict the influence sources.
\newpage
\section{Algorithms}\label{ap:algorithms}
\begin{algorithm}
\caption{Collect dataset with GS}
\begin{algorithmic}[1]
\State \textbf{Input:} $T$, $\pi_0$
\Comment{Global simulator and exploratory policy}
\For{$n \in \langle 0, ..., N / T \rangle$}
\State  $s_0 \gets$ reset$(T)$
\Comment{Reset global simulator}
\State $x_0 \gets s_0$ \Comment{Extract local state from global state}
\State $l_0 \gets x_0$
\Comment{Initialize ALSH with initial local state}
\State $o_{0} \sim O(o_{0} \mid s_{0})$
\Comment{Sample observation from $O$}
\State $h_0 \gets o_0$
\Comment{Initialize AOH with initial observation}
\For{$t \in \langle0, ..., T\rangle$}
\State $\langle u_t^0, ..., u_t^m \rangle \gets s_t$ \Comment{Extract influence sources from global state}
\State $D \gets \{ l_t, \langle u_t^0, ..., u_t^M \rangle \}$ 
\Comment{Append ALSH-influence-source pair to the dataset}
\State $a_t \sim \pi(h_t)$ 
 \Comment{Sample action}
\State $s_{t+1} \sim T(s_{t+1} \mid s_t, a_t)$  
\Comment{Sample next state from GS}
\State $x_{t+1} \gets s_{t+1}$ 
\Comment{Extract local state from global state}
\State $l_{t+1} \gets \langle a_t, x_{t+1} \rangle $ 
\Comment{Append action-local-state pair to ALSH}
\State $o_{t+1} \sim O(o_{t+1} \mid s_{t+1})$
\Comment{Sample observation from $O$}
\State $h_{t+1} \gets \langle a_t, o_{t+1} \rangle$ 
 \Comment{Append actions-observation pair to the AOH}
\EndFor
\EndFor
\end{algorithmic}
\label{alg:collect}
\end{algorithm}
\begin{algorithm}
\caption{Simulate trajectory with IALS}
\begin{algorithmic}[1]
\State \textbf{Input:} $\dot{T},\pi,\hat{I}_\theta$
\Comment{Local simulator, policy, and AIP}
  \State $o_0 \gets$ reset($\dot{T}$)
  \Comment{Reset local simulator}
  \State $h_0 \gets o_0$
  \Comment{Initialize history with initial observation}
  \For{$t \in \langle0, ..., T\rangle$}
  \State $a_t \sim \pi(h_t)$ 
  \Comment{Sample action}
  \For{$m \in \langle 0, ..., M\rangle$}
  \Comment{Iterate over the M influence sources}
  \State  $u_t^m \sim \hat{I}_{\theta_m}(l_t) $ 
  \Comment{Sample influence source from AIP}
  \EndFor
  \State $x_{t+1} \sim \dot{T}(x_{t+1}| x_t, a_t, u_t = \langle u^0, ..., u^M \rangle)$ 
  \Comment{Sample next local state from LS}
  \State $l_{t+1} \gets \langle a_t, x_{t+1} \rangle $ 
  \Comment{Append action-local-state pair to ALSH}
  \State $o_{t+1} \sim O(o_{t+1}|x_{t+1})$  
  \Comment{Sample observation from $O$}
  \State $h_{t+1} \gets \langle a_t, o_{t+1} \rangle $ 
  \Comment{Append action-observation pair to AOH}
  \EndFor
\end{algorithmic}
\label{alg:sample}
\end{algorithm}

% \section{Hyperparameters}\label{ap:hyperparameters}
\end{document}